\documentclass[10pt,journal,compsoc]{IEEEtran}
\usepackage{cite}
\usepackage{amsmath}
\usepackage{amssymb}
\usepackage{graphicx}
\usepackage{multirow}
\usepackage{adjustbox}
\usepackage{array}
\usepackage{booktabs}
\usepackage{xcolor}
\usepackage{placeins}
\usepackage{eso-pic}
\usepackage{hyperref}
\hypersetup{pdftitle={Context-Aware Topological Distillation Graph Network for Asymmetric EEG\textendash{}Eye Movement Emotion Recognition}, pdfauthor={Peijun Shi, Xiujin Zhu, Zhenguo Lao, Chee-Onn Chow, Wei Ru Wong, Zhen Liu, Xiaobing Jiang}}

\newtheorem{proposition}{Proposition}


\newcommand{\biophoto}[1]{\IfFileExists{#1}%
  {\includegraphics[width=1in,height=1.25in,clip,keepaspectratio]{#1}}%
  {\fcolorbox{red}{white}{\parbox[c][\dimexpr1.25in-2\fboxsep-2\fboxrule\relax][c]{\dimexpr1in-2\fboxsep-2\fboxrule\relax}{\centering\scriptsize\color{red}photo}}}}

\newcommand{\pmr}[2]{#1{\scriptsize$\,\pm\,$#2}}
\newcommand{\sigup}{$^{\blacktriangle}$}
\newcommand{\sigdn}{$^{\triangledown}$}
\newcommand{\sg}{\mathrm{sg}}
\newcommand{\KL}{\mathrm{KL}}
\newcommand{\rsm}{\mathrm{softmax}_{\mathrm{row}}}
\newcommand{\Aphy}{\mathbf{A}^{\mathrm{phy}}}

\begin{document}
\AddToShipoutPictureFG*{\AtPageLowerLeft{\raisebox{0.3in}{\makebox[\paperwidth]{\parbox{\textwidth}{\centering\footnotesize This work has been submitted to the IEEE for possible publication. Copyright may be transferred without notice, after which this version may no longer be accessible.}}}}}

\title{Context-Aware Topological Distillation Graph Network for Asymmetric EEG--Eye Movement Emotion Recognition}

\author{Peijun~Shi$^{\ast}$, Xiujin~Zhu$^{\ast}$, Zhenguo~Lao, Chee-Onn~Chow,~\IEEEmembership{Senior~Member,~IEEE}, Wei~Ru~Wong, Zhen~Liu, and~Xiaobing~Jiang
\IEEEcompsocitemizethanks{\IEEEcompsocthanksitem $^{\ast}$P. Shi and X. Zhu contributed equally to this work.
\IEEEcompsocthanksitem P. Shi, Z. Lao, C.-O. Chow, and W. R. Wong are with the Department of Electrical Engineering, Faculty of Engineering, Universiti Malaya, 50603 Lembah Pantai, Kuala Lumpur, Malaysia.
\IEEEcompsocthanksitem X. Zhu, Z. Liu, and X. Jiang are with the Department of Neurosurgery, Union Hospital, Tongji Medical College, Huazhong University of Science and Technology, Wuhan, Hubei 430022, China, and with the Hubei Provincial Clinical Research Center for Brain Computer Interface, Huazhong University of Science and Technology, Wuhan, Hubei 430022, China.
\IEEEcompsocthanksitem Corresponding authors: Chee-Onn Chow (e-mail: cochow@um.edu.my) and Xiaobing Jiang (e-mail: 2004xh0835@hust.edu.cn).}}

\markboth{Submitted to IEEE Transactions on Affective Computing}%
{Shi \MakeLowercase{\textit{et al.}}: Context-Aware Topological Distillation Graph Network for EEG--Eye Movement Emotion Recognition}

\IEEEtitleabstractindextext{%
\begin{abstract}
EEG--eye movement emotion recognition can support mental-health monitoring and adaptive interfaces, but most fusion models treat the two signals as parallel features, classify each window alone and degrade when eye data are lost. We propose the Context-Aware Topological Distillation Graph Network (CTD-GNet), in which EEG features form the nodes of an electrode graph, and eye movements and preceding windows modulate its edges, nodes and readout, the edges through electrode-specific prompts. Bidirectional distillation of the graph, node relations and predictions couples this multimodal teacher with a single-window EEG student, improving the teacher and aligning the two graphs 35-fold. In leave-one-subject-out tests on SEED and SEED-IV against 17 baselines, CTD-GNet has the highest mean accuracy, 84.91\% and 76.17\%, significantly above 9 and 6 of 13 eye-based baselines after Holm correction, the smallest cross-subject spread among multimodal models and the highest worst-subject accuracy. It is the most accurate model at every rate of missing eye data on SEED-IV, keeps 66.1\% on SEED without eye data, more than any EEG-only model, and stays 9.3 and 11.9 points above all single-window fusion models under eye-movement noise of one standard deviation. On film clips unseen in training, it leads on SEED with 77.1\%.
\end{abstract}

\begin{IEEEkeywords}
Electroencephalography, emotion recognition, eye movements, graph neural networks, knowledge distillation, missing modalities, multimodal fusion.
\end{IEEEkeywords}}

\maketitle

\IEEEdisplaynontitleabstractindextext

\IEEEpeerreviewmaketitle

\IEEEraisesectionheading{\section{Introduction}\label{sec:introduction}}

\IEEEPARstart{E}{motion} recognition from physiological signals~\cite{alarcao2017emotions} is moving from the laboratory into mental-health monitoring, adaptive interfaces and driver monitoring. Electroencephalography (EEG) measures cortical activity, and eye tracking measures fixations, saccades, blinks and pupil size, which reflect visual attention and arousal~\cite{bradley2008pupil}. On SEED~\cite{zheng2015investigating} and SEED-IV~\cite{zheng2019emotionmeter}, combining the two signals outperforms either alone~\cite{zheng2019emotionmeter,lu2015combining,liu2022comparing}. Three problems keep this gain from reaching practice.

\begin{figure}[t]
\centering
\includegraphics[width=\columnwidth,page=1]{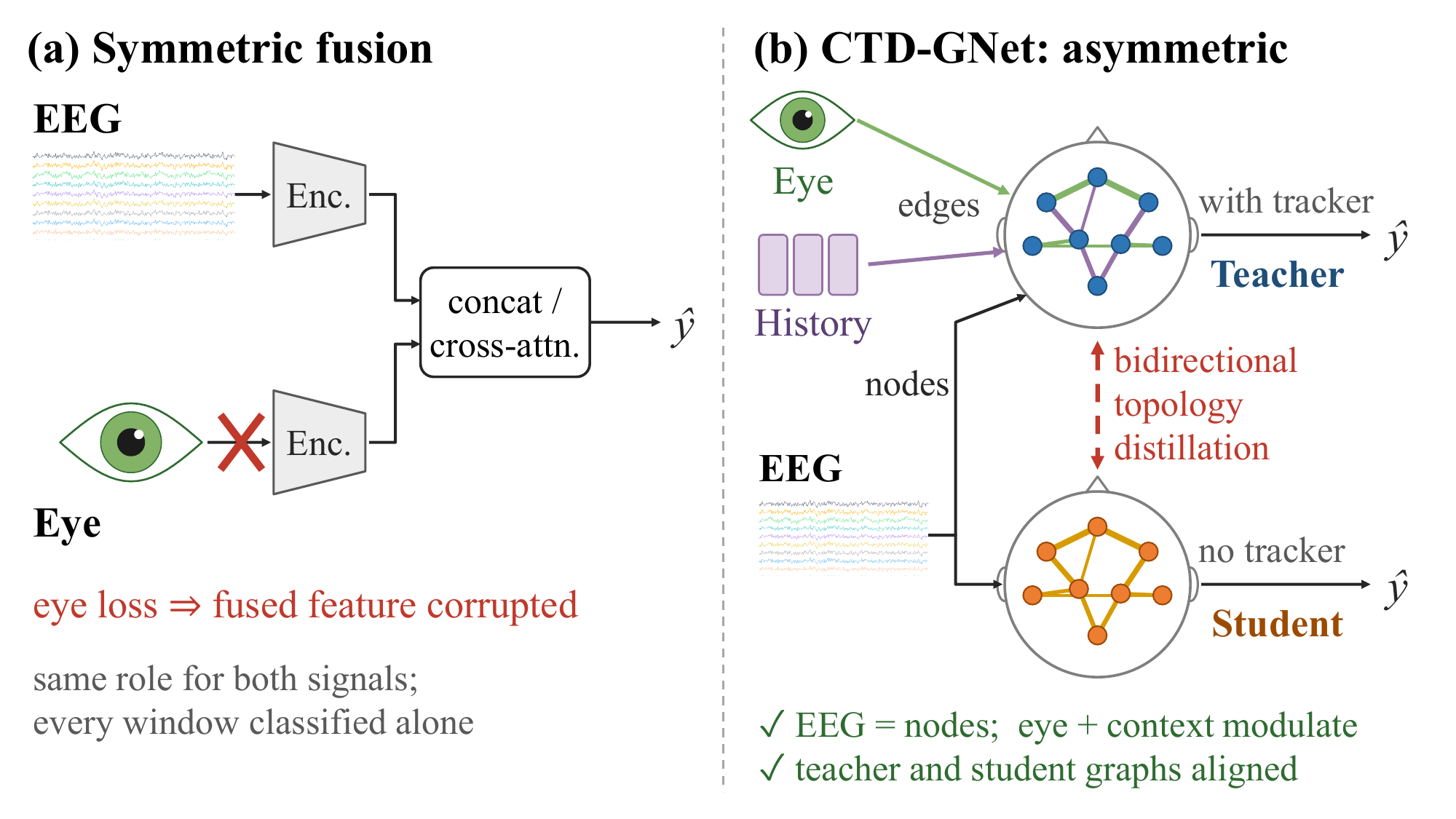}
\caption{Two ways to combine EEG and eye movements. (a)~Symmetric fusion encodes both signals as feature vectors and merges them by concatenation or cross-attention; missing eye data change the fused feature, and each window is classified independently. (b)~CTD-GNet: EEG gives the nodes of an electrode graph, and eye movements (green) and the preceding windows (purple) modulate its edges, nodes and readout. Bidirectional topology distillation couples the graph with that of an EEG-only student, which serves devices without an eye tracker.}
\label{fig:concept}
\end{figure}

First, fusion models treat the two signals alike. Concatenation, bimodal autoencoders~\cite{liu2016emotion}, canonical correlation~\cite{liu2022comparing}, tensor and low-rank fusion~\cite{zadeh2017tensor,liu2018efficient} and cross-modal attention~\cite{tsai2019multimodal} encode each signal as a vector and merge the vectors. Yet eye movements indicate where attention is directed, and attention changes which cortical regions exchange information~\cite{corbetta2002control}: eye movements describe how regions interact, whereas EEG describes what each region does. Merging the two as parallel vectors discards this difference.

Second, emotions induced by a film clip unfold over tens of seconds~\cite{kuppens2017emotion}, but most models classify each window alone, smooth the features over the trial~\cite{zheng2015investigating,shi2010offline} or feed the window sequence to a recurrent classifier~\cite{tang2017multimodal}. In all three cases, the preceding windows do not change how the regions of the current window are related.

Third, eye trackers lose data during blinks, head movements and calibration drift. Fusion models use both signals in every prediction, and many collapse without the eye data: in our experiments, cross-attention fusion falls to 24.0\% on SEED-IV when the eye data are removed, below chance. Missing-modality training~\cite{neverova2016moddrop} and reconstruction~\cite{zhao2021missing} let a fused model tolerate absent inputs, but the eye data still enter the fused representation directly, and multi-window models need a buffer of past windows on the device.

The three problems share one cause: eye movements and temporal context are treated as additional features rather than as signals that shape cortical interaction. We therefore build an electrode graph from EEG and let the eye movements and the context modulate its edges, nodes and readout. This design addresses the three problems together. The two signals take different roles; the preceding windows change the graph of the current window; and because the EEG evidence enters every graph computation, missing or corrupted eye data weaken the prediction instead of corrupting it. The effect of eye movements and context on connectivity is concentrated in the adjacency matrix, which a model that sees only EEG can learn to approximate.

We propose the Context-Aware Topological Distillation Graph Network (CTD-GNet), shown in Fig.~\ref{fig:concept}. Its teacher computes the edges by an attention over the 62 electrodes, with a recurrent context of the preceding windows in the queries and the eye movements in the keys; a gated node pathway and a gated multimodal unit~\cite{arevalo2017gated} add the eye movements to the nodes and the readout. Because eye movements and past windows are available during training but not always at deployment, we treat them as privileged information~\cite{vapnik2009new,lopezpaz2016unifying} and train an EEG-only student that sees only the current window. The two networks are trained jointly by bidirectional distillation of the graph, the node relations and the predictions. The main contributions of this work are as follows.
\begin{itemize}
\item We propose an asymmetric fusion paradigm in which EEG forms the nodes of an electrode graph and eye movements and temporal context modulate its edges, nodes and readout, so that the EEG evidence enters every computation; the resulting model has the highest worst-subject accuracy of all models, 74.02\% on SEED.
\item We introduce bidirectional distillation between the multimodal teacher and a single-window EEG student at the levels of the graph, the node relations and the predictions, which aligns their graphs 35-fold, so that both networks can be read on one connectivity, and also improves the teacher.
\item We design and evaluate the framework for unreliable sensors, with missing and noisy eye data and film clips unseen in training; with 80\% of the eye data missing on SEED-IV, the teacher is 4.1 points more accurate than the strongest baseline.
\end{itemize}

The code will be released at \url{https://github.com/Peijun1122/CTD-GNet}. The remainder of this paper is organized as follows. Section~\ref{sec:related_work} reviews related work. Section~\ref{sec:method} presents CTD-GNet. Section~\ref{sec:experiments} describes the experimental setup and results. Section~\ref{sec:discussion} discusses the findings, limitations and future work, and Section~\ref{sec:conclusion} concludes the paper.

\section{Related Work}
\label{sec:related_work}

\subsection{EEG-based emotion recognition}
Differential entropy (DE) features~\cite{duan2013differential} are standard on SEED and SEED-IV, and Zheng and Lu~\cite{zheng2015investigating} identified the $\beta$ and $\gamma$ bands and the lateral temporal electrodes as the most discriminative. Convolutional models~\cite{lawhern2018eegnet} learn from raw signals or feature maps, and hemispheric models~\cite{li2021bidann,li2021bihdm} use the difference between the hemispheres. Cross-subject models align the features of different subjects, for example by adversarial training~\cite{li2021bidann} or with class prototypes~\cite{zhou2024prpl}. Graph models treat electrodes as nodes: DGCNN~\cite{song2018eeg} learns one adjacency matrix, RGNN~\cite{zhong2022eeg} initializes it from electrode distances and hemispheric pairs, and instance-adaptive and self-organized graphs~\cite{song2020instance,li2021sognn} compute it from the EEG of each sample, as graph attention~\cite{velickovic2018graph} computes edge weights from node features. In all of them, the graph depends on EEG only.

\subsection{Multimodal fusion and temporal context}
EEG and eye movements have been fused at feature and decision level~\cite{lu2015combining}, by a bimodal deep autoencoder (BDAE)~\cite{liu2016emotion}, a bimodal LSTM~\cite{tang2017multimodal}, and deep canonical correlation analysis (DCCA)~\cite{andrew2013deep} with its attention variant DCCA-AM~\cite{liu2022comparing}. Recent models let one modality guide the encoding of the other~\cite{fu2024cross}, align the two modalities across subjects~\cite{zhu2024eeg,zhu2025multimodal,chen2025comprehensive} or adapt the fused features to the unlabeled data of a new subject~\cite{jimenez2024cfda}. General multimodal models use tensor products~\cite{zadeh2017tensor}, low-rank factors~\cite{liu2018efficient}, invariant and specific subspaces~\cite{hazarika2020misa}, gated units~\cite{arevalo2017gated} or cross-modal attention~\cite{tsai2019multimodal}. Most of these models fuse or align feature vectors of the two signals, so the eye movements do not shape the relations between electrodes, and a corrupted eye input alters the fused representation directly. Temporal context enters through smoothing of the features~\cite{shi2010offline,zheng2015investigating} or recurrent~\cite{cho2014learning} and temporal convolutional~\cite{bai2018empirical} networks over the window sequence; it changes the features or the output, but not the graph.

\subsection{Distillation, privileged information and missing modalities}
Distillation transfers outputs~\cite{hinton2015distilling}, features~\cite{romero2015fitnets}, sample relations~\cite{park2019relational,tung2019similarity} or the local structure of graph embeddings~\cite{yang2020distilling}, and deep mutual learning~\cite{zhang2018deep} distills between peers in both directions; across modalities, a visual teacher has been distilled into an EEG student for continuous emotion recognition~\cite{zhang2022visual}. Learning using privileged information~\cite{vapnik2009new} and generalized distillation~\cite{lopezpaz2016unifying} exploit data available only in training. Missing-modality methods drop modalities during training~\cite{neverova2016moddrop}, reconstruct them~\cite{zhao2021missing,ma2021smil}, model their shared and specific features~\cite{wang2023shaspec} or learn a prompt for each pattern of missing inputs~\cite{lee2023multimodal}, so that one fused model tolerates absent inputs; when the eye data are present, they still enter the fused features directly. Prompts~\cite{jia2022visual} and FiLM~\cite{perez2018film} condition networks with learned or input-dependent vectors, an idea that CTD-GNet applies to the edges of a graph. None of these methods transfers the graph through which a privileged modality acts, or couples the teacher to that graph in the opposite direction.

\section{Method}
\label{sec:method}

\begin{figure*}[t]
\centering
\includegraphics[width=\textwidth,page=2]{Framework.pdf}
\caption{Architecture of CTD-GNet. Teacher (blue): a GRU context bank encodes the $K$ preceding windows into $\mathbf{c}_t$; a gated node pathway adds the eye movements to the EEG node embeddings; the generator adds the context prompt $\boldsymbol{\Pi}^{c}$ to the queries, the eye prompt $\boldsymbol{\Pi}^{e}$ to the keys and $\log\Aphy$ to the scores, and a row-wise softmax gives $\mathbf{A}_T$; a GCN propagates over $\mathbf{A}_T$, and a gated multimodal unit fuses the graph readout with the eye movements and $\mathbf{c}_t$. Student (orange): the same operations with prompts from the current EEG window. Red dashed arrows: distillation losses, with gradients to both networks. Right: deployment and training objective.}
\label{fig:framework}
\end{figure*}

Fig.~\ref{fig:framework} shows the architecture. The teacher uses EEG, eye movements and context, the student uses the current EEG window, and both use the same physical prior. Section~\ref{sec:problem} defines the task, Section~\ref{sec:teacher} describes the teacher, Section~\ref{sec:student} the student, and Section~\ref{sec:distill} the distillation and deployment.

\subsection{Problem formulation}
\label{sec:problem}

Each trial is divided into non-overlapping windows indexed by $t$. Window $t$ provides the DE features $\mathbf{X}_g^t\in\mathbb{R}^{N\times d_g}$ of $N=62$ electrodes in $d_g=5$ frequency bands, the eye-movement features $\mathbf{x}_e^t\in\mathbb{R}^{d_e}$, and the label $y^t\in\{1,\dots,C\}$ of the trial. The history $\mathcal{H}_t=\{(\mathbf{X}_g^{t'},\mathbf{x}_e^{t'})\}_{t'=t-K}^{t-1}$ contains the $K$ preceding windows of the same trial. The teacher uses the eye cue $\bar{\mathbf{x}}_e^{t}=\frac{1}{K+1}\sum_{t'=t-K}^{t}\mathbf{x}_e^{t'}$, which averages tracker noise. The teacher $f_T$ maps $(\mathbf{X}_g^t,\mathbf{x}_e^t,\mathcal{H}_t)$ to logits $\mathbf{P}_T\in\mathbb{R}^{C}$, and the student $f_S$ maps $\mathbf{X}_g^t$ to $\mathbf{P}_S\in\mathbb{R}^{C}$; the eye movements and the history are privileged information~\cite{vapnik2009new} available only to the teacher. Vectors are row vectors, $\mathbf{M}_i$ is the $i$-th row of $\mathbf{M}$, $\rsm$ is the row-wise softmax, and $\sg(\cdot)$ is the stop-gradient operator.

\subsection{Context-aware teacher}
\label{sec:teacher}

The teacher consists of a temporal context bank, a dual-prompt graph generator and a graph backbone with a gated readout. The temporal context bank (TCB) summarizes the preceding windows with a gated recurrent unit (GRU)~\cite{cho2014learning}. Each history window $t'$ is mapped to a token
\begin{gather}
\mathbf{r}^{t'}=\big[\mathbf{x}_e^{t'},\ \phi_g(\mathbf{X}_g^{t'})\big]\in\mathbb{R}^{d_e+d_s},\nonumber\\
\phi_g(\mathbf{X})=\mathrm{vec}(\mathbf{X})\,\mathbf{W}_g+\mathbf{b}_g,
\label{eq:token}
\end{gather}
where $\phi_g$ linearly projects the 310-dimensional DE vector to $d_s$ dimensions. The GRU reads the sequence $\mathbf{r}^{t-K},\dots,\mathbf{r}^{t-1}$ in temporal order,
\begin{gather}
\boldsymbol{\xi}_k=\mathrm{GRU}\big(\mathbf{r}^{t-K-1+k},\,\boldsymbol{\xi}_{k-1}\big),\qquad k=1,\dots,K,\nonumber\\
\mathbf{c}_t=\mathrm{LN}(\boldsymbol{\xi}_K)\in\mathbb{R}^{d_c},
\label{eq:tcn}
\end{gather}
with $\boldsymbol{\xi}_0=\mathbf{0}$ and layer normalization LN. The GRU gates decide how much of the accumulated state each window keeps, so $\mathbf{c}_t$ follows the slow course of the induced emotion. The validation protocol selects between the GRU and a temporal convolutional network~\cite{bai2018empirical}. Windows at the start of a trial are padded with the first window, and histories do not cross trials.

The dual-prompt graph generator computes one adjacency matrix per window from node embeddings, two prompts and a physical prior. Each electrode $i$ is a node with the EEG embedding
\begin{equation}
\mathbf{e}_i=\mathbf{x}_{g,i}^{t}\mathbf{W}_{\mathrm{in}}+\mathbf{p}_i\in\mathbb{R}^{d_h},\qquad i=1,\dots,N,
\label{eq:embed}
\end{equation}
where $\mathbf{x}_{g,i}^{t}$ is the $i$-th row of $\mathbf{X}_g^t$ and $\mathbf{p}_i$ is a learnable electrode embedding; $\mathbf{E}\in\mathbb{R}^{N\times d_h}$ stacks the embeddings. The eye movements reach the nodes through a gate that each electrode sets for itself:
\begin{gather}
\mathbf{v}_t=\mathrm{LN}(\bar{\mathbf{x}}_e^{t}\mathbf{W}_n),\nonumber\\
\mathbf{g}_i=\sigma\big(\mathrm{LN}(\mathbf{e}_i)\mathbf{W}_{ge}+\mathbf{v}_t\mathbf{W}_{gv}+\mathbf{b}_n\big),\nonumber\\
\tilde{\mathbf{e}}_i=\mathbf{e}_i+\mathbf{g}_i\odot\mathbf{u}^{n}_i\odot\mathbf{v}_t,
\label{eq:nodegate}
\end{gather}
where $\sigma$ is the logistic function and $\mathbf{u}^{n}_i\in\mathbb{R}^{d_h}$ is an electrode embedding. Each electrode admits or blocks the eye input according to its own EEG state, and the EEG term $\mathbf{e}_i$ is always kept; $\tilde{\mathbf{E}}$ stacks the gated embeddings.

The edge weights come from an attention over the electrodes. The context modulates the queries, which determine what each electrode attends to, and the eye movements modulate the keys, which determine which electrodes act as sources:
\begin{gather}
\boldsymbol{\pi}^{c}_i=\mathbf{u}^{c}_i\odot(\mathbf{c}_t\mathbf{W}_c),\qquad
\boldsymbol{\pi}^{e}_j=\mathbf{u}^{e}_j\odot(\bar{\mathbf{x}}_e^{t}\mathbf{W}_e),
\label{eq:prompt}\\
\mathbf{q}_i=\tilde{\mathbf{e}}_i\mathbf{W}_q+\boldsymbol{\pi}^{c}_i,\qquad
\mathbf{k}_j=\tilde{\mathbf{e}}_j\mathbf{W}_k+\boldsymbol{\pi}^{e}_j,
\label{eq:qk}\\
S_{ij}=\mathbf{q}_i\mathbf{k}_j^{\top}/\sqrt{d_k},
\label{eq:score}
\end{gather}
where $\mathbf{W}_c\in\mathbb{R}^{d_c\times d_k}$ and $\mathbf{W}_e\in\mathbb{R}^{d_e\times d_k}$ are projections and $\mathbf{u}^{c}_i,\mathbf{u}^{e}_j\in\mathbb{R}^{d_k}$ are electrode prompt embeddings. The product with $\mathbf{u}$ gives each electrode its own version of a global cue, as FiLM~\cite{perez2018film} does for feature channels, and the matrices $\boldsymbol{\Pi}^{c}$ and $\boldsymbol{\Pi}^{e}$ stack the prompts.

\begin{figure*}[t]
\centering
\includegraphics[width=\textwidth,page=3]{Framework.pdf}
\caption{Dual-prompt graph generator. (a)~Proposition~\ref{prop:cancel}: with shared prompts $\boldsymbol{\gamma}$ and $\boldsymbol{\eta}$, the term $(\mathbf{a}_i+\boldsymbol{\gamma})\boldsymbol{\eta}^{\top}$ is constant within each row (red frame) and is removed by the row-wise softmax. (b)~Electrode-specific prompts make $\mathbf{q}_i\boldsymbol{\pi}^{e\top}_j$ vary with $i$ and $j$, so the eye movements change the graph. (c)~The physical prior acts as a soft mask: $\rsm(\mathbf{S})\odot\Aphy$ normalized per row equals $\rsm(\mathbf{S}+\log\Aphy)$.}
\label{fig:prompt}
\end{figure*}

The prompts are electrode-specific by design. A simpler design adds one global embedding of each cue to all queries or keys; Proposition~\ref{prop:cancel}, which motivates our choice, shows that such a cue cannot change the graph electrode by electrode, and Fig.~\ref{fig:prompt}(a) illustrates the decomposition used in the proof.

\begin{proposition}[Prompt cancellation]
\label{prop:cancel}
Consider an attention matrix $\mathbf{A}=\rsm(\mathbf{S})$ with scores $S_{ij}=(\mathbf{a}_i+\boldsymbol{\gamma})(\mathbf{b}_j+\boldsymbol{\eta})^{\top}/\sqrt{d_k}$, where $\mathbf{a}_i$ and $\mathbf{b}_j$ depend on the electrode features and $\boldsymbol{\gamma},\boldsymbol{\eta}\in\mathbb{R}^{d_k}$ are prompts shared by all electrodes. Then (i) $\mathbf{A}$ does not depend on $\boldsymbol{\eta}$, and (ii) $\mathbf{A}$ depends on $\boldsymbol{\gamma}$ only through $\boldsymbol{\gamma}\mathbf{b}_j^{\top}/\sqrt{d_k}$, which adds the same vector to every row of $\mathbf{S}$.
\end{proposition}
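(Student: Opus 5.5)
The argument rests on the shift invariance of the row-wise softmax. For any row vector $\mathbf{s}$ and scalar $\alpha$, $\mathrm{softmax}(\mathbf{s}+\alpha\mathbf{1})=\mathrm{softmax}(\mathbf{s})$, because the common factor $e^{\alpha}$ cancels between numerator and denominator. Hence $\rsm(\mathbf{S})=\rsm(\mathbf{S}')$ whenever $S_{ij}-S'_{ij}$ depends on $i$ alone. We first record this as a one-line lemma. We then expand the bilinear score into four terms,
\begin{equation*}
\sqrt{d_k}\,S_{ij}=\mathbf{a}_i\mathbf{b}_j^{\top}+\boldsymbol{\gamma}\mathbf{b}_j^{\top}+\mathbf{a}_i\boldsymbol{\eta}^{\top}+\boldsymbol{\gamma}\boldsymbol{\eta}^{\top},
\end{equation*}
and sort them by the indices they depend on. This is the decomposition shown in Fig.~\ref{fig:prompt}(a).

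For part (i), the two terms containing $\boldsymbol{\eta}$ combine into $(\mathbf{a}_i+\boldsymbol{\gamma})\boldsymbol{\eta}^{\top}/\sqrt{d_k}$. This quantity depends on the row index $i$ but not on the column index $j$, so it is constant along each row of $\mathbf{S}$. By the lemma, deleting it leaves $\mathbf{A}$ unchanged, and
\begin{equation*}
\mathbf{A}=\rsm\big(\mathbf{a}_i\mathbf{b}_j^{\top}/\sqrt{d_k}+\boldsymbol{\gamma}\mathbf{b}_j^{\top}/\sqrt{d_k}\big).
\end{equation*}
The right-hand side contains no $\boldsymbol{\eta}$, which proves (i).

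For part (ii), the displayed formula shows that $\boldsymbol{\gamma}$ enters $\mathbf{A}$ only through the row vector $\big(\boldsymbol{\gamma}\mathbf{b}_1^{\top},\dots,\boldsymbol{\gamma}\mathbf{b}_N^{\top}\big)/\sqrt{d_k}$. This vector depends on $j$ but not on $i$, so it adds the same vector to every row of $\mathbf{S}$. Equivalently, the effect of the shared query prompt on each row $\mathbf{A}_i$ is a reweighting by factors $e^{\boldsymbol{\gamma}\mathbf{b}_j^{\top}/\sqrt{d_k}}$ that are common to all rows, followed by renormalisation. The prompt therefore cannot change the graph electrode by electrode.

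The only point needing care is interpretive rather than computational. In (ii), "depends only through" should be read as: $\mathbf{A}$ is a function of the features $\{\mathbf{a}_i,\mathbf{b}_j\}$ and of this single shared row offset. We do not claim that this offset leaves $\mathbf{A}$ unchanged, since a $j$-dependent shift is not removed by a row-wise softmax. Stating this precisely is the main step, and it contrasts with the electrode-specific case of Fig.~\ref{fig:prompt}(b). There the eye term in the score is $\mathbf{q}_i\boldsymbol{\pi}^{e\top}_j$, which varies with both $i$ and $j$ and so survives the softmax.
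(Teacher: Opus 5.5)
Your proof is correct and matches the paper's: you expand $\sqrt{d_k}\,S_{ij}$, group the $\boldsymbol{\eta}$ terms into $(\mathbf{a}_i+\boldsymbol{\gamma})\boldsymbol{\eta}^{\top}$, which is constant along each row and is removed by the row-wise softmax, and you observe that the remaining prompt term $\boldsymbol{\gamma}\mathbf{b}_j^{\top}$ does not depend on $i$. Your explicit shift-invariance lemma and your remark that this $j$-dependent offset is not itself cancelled are sound clarifications, not a different route.
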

\begin{IEEEproof}
Expanding the product gives $\sqrt{d_k}\,S_{ij}=\mathbf{a}_i\mathbf{b}_j^{\top}+\boldsymbol{\gamma}\mathbf{b}_j^{\top}+(\mathbf{a}_i+\boldsymbol{\gamma})\boldsymbol{\eta}^{\top}$. The last term does not depend on $j$. The softmax of a row does not change when a constant is added to all its entries, so this term cancels, which proves (i). The remaining prompt term $\boldsymbol{\gamma}\mathbf{b}_j^{\top}$ does not depend on $i$, which proves (ii).
\end{IEEEproof}

A global eye embedding added to the keys thus has no effect on $\mathbf{A}$, and a global context embedding added to the queries shifts all rows by the same vector, so that every electrode re-weights its neighbors in the same way. Proposition~\ref{prop:cancel} concerns additive cues, the usual way to condition attention with prompts~\cite{jia2022visual}; electrode-specific prompts keep this additive form and let a cue act on each electrode separately. With Eq.~\eqref{eq:prompt}, the logit difference between two neighbors $j$ and $j'$ of electrode $i$ contains $\mathbf{q}_i\big((\mathbf{u}^{e}_j-\mathbf{u}^{e}_{j'})\odot\bar{\mathbf{x}}_e^t\mathbf{W}_e\big)^{\top}/\sqrt{d_k}$, which depends on $\bar{\mathbf{x}}_e^t$ whenever $\mathbf{u}^{e}_j\neq\mathbf{u}^{e}_{j'}$, as Fig.~\ref{fig:prompt}(b) illustrates; the term $\boldsymbol{\pi}^{c}_i\mathbf{k}_j^{\top}$ likewise depends on the context and on both electrodes. Proposition~\ref{prop:cancel} is a design motivation rather than a predictor of accuracy: because the eye movements reach the model through several redundant routes, removing one of them has a limited effect in the ablations of Section~\ref{sec:ablation}.

Finally, a physical prior constrains the graph. Coupling between electrodes decreases with distance, and emotion processing involves homologous regions of the two hemispheres~\cite{davidson1992anterior,li2021bihdm}. The prior
\begin{equation}
A^{\mathrm{phy}}_{ij}=
\begin{cases}
1, & i=j\ \text{or}\ (i,j)\in\mathcal{P},\\[2pt]
\exp\!\big(-\|\boldsymbol{\rho}_i-\boldsymbol{\rho}_j\|_2^2/(2\sigma_{\rho}^2)\big), & \text{otherwise},
\end{cases}
\label{eq:phy}
\end{equation}
uses the 3-D electrode positions $\boldsymbol{\rho}_i$ of the extended 10--20 system, the set $\mathcal{P}$ of 27 symmetric pairs, such as FP1--FP2, and the median pairwise distance $\sigma_{\rho}$. Unlike RGNN~\cite{zhong2022eeg}, which uses a similar prior only for initialization, CTD-GNet applies it to every generated graph:
\begin{equation}
\mathbf{Z}_T=\mathbf{S}+\log\Aphy,\qquad \mathbf{A}_T=\rsm(\mathbf{Z}_T).
\label{eq:at}
\end{equation}
Since $A_{T,ij}=\exp(S_{ij})A^{\mathrm{phy}}_{ij}/\sum_{k}\exp(S_{ik})A^{\mathrm{phy}}_{ik}$, the prior acts as a soft mask, as Fig.~\ref{fig:prompt}(c) shows: distant pairs are penalized in proportion to their squared distance, symmetric pairs are not, and each row of $\mathbf{A}_T$ sums to one. We call $\mathbf{Z}_T$ the topology logits.

\begin{figure*}[t]
\centering
\includegraphics[width=\textwidth,page=4]{Framework.pdf}
\caption{Bidirectional multi-level topology distillation. (a)~$\mathcal{L}_{\mathrm{topo}}$ compares rows of the softened topologies, $\mathcal{L}_{\mathrm{rel}}$ rows of the $N\times N$ electrode relation matrices, which need no projection between networks of different width, and $\mathcal{L}_{\mathrm{logit}}$ the class distributions. (b)~Proposition~\ref{prop:bound}: the topology difference $\Delta\mathbf{A}$ bounds the message difference. (c)~Training: $\omega(e)$ ramps to 1 over $E_w$ epochs, and the distillation gradients reach both networks.}
\label{fig:distill}
\end{figure*}

A backbone of $L$ residual graph convolution layers~\cite{kipf2017semi} then propagates the node embeddings over the graph:
\begin{equation}
\mathbf{H}^{(l)}=\mathrm{LN}\Big(\mathbf{H}^{(l-1)}+\mathrm{Drop}\big(\mathrm{GELU}(\mathbf{A}\mathbf{H}^{(l-1)}\mathbf{W}^{(l)})\big)\Big),
\label{eq:gcn}
\end{equation}
where all layers use the adjacency matrix of the current window. The teacher uses $\mathbf{H}^{(0)}=\tilde{\mathbf{E}}$, $\mathbf{A}=\mathbf{A}_T$ and $L=L_T$, giving $\mathbf{H}_T$; mean and max pooling over the nodes give the graph representation $\mathbf{z}_g$.

For the readout, a gated multimodal unit (GMU)~\cite{arevalo2017gated} combines the graph with the eye encoding $\mathbf{z}_e=\mathrm{MLP}(\bar{\mathbf{x}}_e^{t})$ and the context $\mathbf{z}_c=\mathbf{c}_t$:
\begin{gather}
\mathbf{h}=\sum_{m\in\{g,e,c\}}\boldsymbol{\alpha}_m\odot\tanh(\mathbf{z}_m\mathbf{W}_m),\nonumber\\
[\boldsymbol{\alpha}_g;\boldsymbol{\alpha}_e;\boldsymbol{\alpha}_c]=\mathrm{softmax}_m\big([\mathbf{z}_g,\mathbf{z}_e,\mathbf{z}_c]\mathbf{W}_{\alpha}\big),
\label{eq:gmu}
\end{gather}
where the softmax runs over the three inputs for each of the $d_r$ dimensions, and a two-layer perceptron maps $\mathbf{h}$ to $\mathbf{P}_T$. Auxiliary linear heads on $\mathbf{z}_g$ and $\mathbf{z}_e$, trained with the loss $\mathcal{L}_{\mathrm{aux}}$ of Eq.~\eqref{eq:total}, keep both pathways discriminative~\cite{wang2020makes}.

The eye movements thus act at three levels: they re-weight the edges, with each row of $\mathbf{A}_T$ constrained by the physical prior; they add a gated term to each node; and they compete with the graph representation in the readout. In all three, the EEG evidence stays in place and the eye movements modulate it, in analogy to the way attention modulates cortical interaction~\cite{corbetta2002control}. The adjacency matrix $\mathbf{A}_T$ collects their effect on connectivity and serves as the interface between teacher and student.

\subsection{Self-prompted student}
\label{sec:student}

The student receives $\mathbf{X}_g^t$ only. Its generator has the same form as the teacher's, with prompts computed from EEG; the superscript $S$ marks student quantities:
\begin{gather}
\mathbf{e}^{S}_i=\mathbf{x}_{g,i}^{t}\mathbf{W}^{S}_{\mathrm{in}}+\mathbf{p}^{S}_i,\qquad
\mathbf{s}^{t}=\phi_s(\mathbf{E}^{S}),
\label{eq:sembed}\\
\boldsymbol{\pi}^{c,S}_i=\mathbf{u}^{c,S}_i\odot(\mathbf{s}^{t}\mathbf{W}^{S}_c),\qquad
\boldsymbol{\pi}^{e,S}_j=\mathbf{u}^{e,S}_j\odot(\mathbf{s}^{t}\mathbf{W}^{S}_e),
\label{eq:sprompt}\\
\mathbf{Q}^{S}=\mathbf{E}^{S}\mathbf{W}^{S}_q+\boldsymbol{\Pi}^{c,S},\qquad
\mathbf{K}^{S}=\mathbf{E}^{S}\mathbf{W}^{S}_k+\boldsymbol{\Pi}^{e,S},\nonumber\\
\mathbf{S}^{S}=\mathbf{Q}^{S}\mathbf{K}^{S\top}/\sqrt{d^{S}_k},\qquad
\mathbf{Z}_S=\mathbf{S}^{S}+\log\Aphy,\nonumber\\
\mathbf{A}_S=\rsm(\mathbf{Z}_S),
\label{eq:as}
\end{gather}
where $\phi_s(\mathbf{E}^{S})=\mathrm{vec}(\mathbf{E}^{S})\,\mathbf{W}_s+\mathbf{b}_s$. The student backbone follows Eq.~\eqref{eq:gcn} with $\mathbf{H}^{(0)}=\mathbf{E}^{S}$, $\mathbf{A}=\mathbf{A}_S$ and $L_S<L_T$ layers of width $d^{S}_h<d^{T}_h$, and a mean--max readout gives $\mathbf{P}_S$. Because both generators have the same form, $\mathbf{A}_S$ can express connectivity of the same kind as $\mathbf{A}_T$, and distillation teaches the student to approximate the teacher graph from EEG alone. The student needs neither an eye tracker nor a buffer of past windows.

\subsection{Bidirectional topology distillation and deployment}
\label{sec:distill}
As Fig.~\ref{fig:distill}(a) illustrates, three losses align the two networks at the graph, at the node representations and at the output, and their gradients reach both networks. At the graph level, with the softened topology $\mathbf{A}^{\tau}=\rsm(\mathbf{Z}/\tau)$,
\begin{equation}
\mathcal{L}_{\mathrm{topo}}=\frac{1}{N}\sum_{i=1}^{N}\KL\big(\mathbf{A}^{\tau}_{T,i}\,\big\|\,\mathbf{A}^{\tau}_{S,i}\big)+\frac{1}{N}\big\|\mathbf{A}_T-\mathbf{A}_S\big\|_F^2 .
\label{eq:ltopo}
\end{equation}
The KL term matches each electrode's distribution over all neighbors, including weak edges; the Frobenius term matches the edges that dominate message passing. Proposition~\ref{prop:bound} relates this term to the output of a graph layer; it is a standard Lipschitz bound, stated for completeness rather than as a new result.

\begin{proposition}[Topology gap bounds the message gap]
\label{prop:bound}
For node features $\mathbf{H}\in\mathbb{R}^{N\times d}$, a weight matrix $\mathbf{W}\in\mathbb{R}^{d\times d'}$ and an element-wise activation $\delta$ with Lipschitz constant $L_{\delta}$,
\begin{equation}
\begin{aligned}&\big\|\delta(\mathbf{A}_T\mathbf{H}\mathbf{W})-\delta(\mathbf{A}_S\mathbf{H}\mathbf{W})\big\|_F\\&\qquad\le L_{\delta}\,\|\mathbf{H}\|_2\,\|\mathbf{W}\|_2\,\|\mathbf{A}_T-\mathbf{A}_S\|_F .\end{aligned}
\label{eq:bound}
\end{equation}
\end{proposition}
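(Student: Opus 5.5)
The plan is a two-step Lipschitz argument: first remove the activation, then bound the remaining linear difference with the submultiplicativity of mixed norms.

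Since $\delta$ acts element-wise and is $L_\delta$-Lipschitz on $\mathbb{R}$, every entry satisfies $|\delta(x)-\delta(y)|\le L_\delta|x-y|$. Squaring and summing over all $N d'$ entries gives
\begin{equation*}
\big\|\delta(\mathbf{A}_T\mathbf{H}\mathbf{W})-\delta(\mathbf{A}_S\mathbf{H}\mathbf{W})\big\|_F\le L_\delta\,\big\|(\mathbf{A}_T-\mathbf{A}_S)\mathbf{H}\mathbf{W}\big\|_F ,
\end{equation*}
where linearity of matrix multiplication was used to factor out $\Delta\mathbf{A}=\mathbf{A}_T-\mathbf{A}_S$.

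Next I would invoke the mixed inequalities $\|\mathbf{M}\mathbf{B}\|_F\le\|\mathbf{M}\|_F\|\mathbf{B}\|_2$ and $\|\mathbf{M}\mathbf{B}\|_F\le\|\mathbf{M}\|_2\|\mathbf{B}\|_F$. The first is the one that matters here. To see it, write $\|\mathbf{M}\mathbf{B}\|_F^2=\sum_i\|\mathbf{M}_i\mathbf{B}\|_2^2$, where $\mathbf{M}_i$ are the rows of $\mathbf{M}$, consistent with the paper's row-vector convention. Each term obeys $\|\mathbf{M}_i\mathbf{B}\|_2\le\|\mathbf{M}_i\|_2\|\mathbf{B}\|_2$ by the definition of the spectral norm, since $\|\mathbf{B}^\top\|_2=\|\mathbf{B}\|_2$. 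Summing over $i$ gives the claim.

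Applying this inequality twice, first to $\Delta\mathbf{A}\,(\mathbf{H}\mathbf{W})$ and then using $\|\mathbf{H}\mathbf{W}\|_2\le\|\mathbf{H}\|_2\|\mathbf{W}\|_2$, yields
\begin{equation*}
\|\Delta\mathbf{A}\,\mathbf{H}\mathbf{W}\|_F\le\|\Delta\mathbf{A}\|_F\,\|\mathbf{H}\|_2\,\|\mathbf{W}\|_2 .
\end{equation*}
Combining this with the first step gives Eq.~\eqref{eq:bound}.

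No step is a real obstacle. The only point needing care is that the spectral norm, not the Frobenius norm, is the correct factor for $\mathbf{H}$ and $\mathbf{W}$. This is exactly what the row-wise decomposition secures, and I would state it explicitly. It is also worth remarking that no property of $\mathbf{A}_T$ or $\mathbf{A}_S$ is used, such as row-stochasticity or the physical prior. The bound therefore holds for any pair of adjacency matrices, and it links the Frobenius term of $\mathcal{L}_{\mathrm{topo}}$ in Eq.~\eqref{eq:ltopo} to the message gap of one layer of Eq.~\eqref{eq:gcn}.
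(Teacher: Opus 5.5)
Your proof is correct and follows the paper's argument step for step: entry-wise Lipschitz continuity removes the activation, then $\|\mathbf{X}\mathbf{Y}\|_F\le\|\mathbf{X}\|_F\|\mathbf{Y}\|_2$ and $\|\mathbf{H}\mathbf{W}\|_2\le\|\mathbf{H}\|_2\|\mathbf{W}\|_2$ finish the bound. The only addition is your row-wise derivation of the mixed Frobenius--spectral inequality, which the paper takes as known; note also that you apply that inequality once, not twice, since the second step is submultiplicativity of the spectral norm.
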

\begin{IEEEproof}
Applied to each entry, the Lipschitz property gives $\|\delta(\mathbf{M})-\delta(\mathbf{M}')\|_F\le L_{\delta}\|\mathbf{M}-\mathbf{M}'\|_F$. Setting $\mathbf{M}-\mathbf{M}'=(\mathbf{A}_T-\mathbf{A}_S)\mathbf{H}\mathbf{W}$ and applying $\|\mathbf{X}\mathbf{Y}\|_F\le\|\mathbf{X}\|_F\|\mathbf{Y}\|_2$ and $\|\mathbf{H}\mathbf{W}\|_2\le\|\mathbf{H}\|_2\|\mathbf{W}\|_2$ gives Eq.~\eqref{eq:bound}.
\end{IEEEproof}

The bound holds for any node features and weights, in particular for those of the student. As Fig.~\ref{fig:distill}(b) illustrates, when $\mathcal{L}_{\mathrm{topo}}\to0$, propagating the student's features over $\mathbf{A}_S$ approaches propagating them over the teacher graph; layer normalization bounds $\|\mathbf{H}\|_2$.

At the node level, with $\bar{\mathbf{H}}$ the row-normalized $\mathbf{H}$,
\begin{gather}
\mathbf{R}=\rsm\big(\bar{\mathbf{H}}\bar{\mathbf{H}}^{\top}/\tau_r\big),\nonumber\\
\mathcal{L}_{\mathrm{rel}}=\frac{1}{N}\sum_{i=1}^{N}\KL\big(\mathbf{R}_{T,i}\,\big\|\,\mathbf{R}_{S,i}\big).
\label{eq:lrel}
\end{gather}
The matrix $\mathbf{R}$ is $N\times N$ in both networks, so no projection between widths is needed; unlike relational distillation across samples~\cite{park2019relational,tung2019similarity}, it relates the electrodes of one window, and $\tau_r<1$ sharpens the cosine similarities.

At the output, temperature-scaled logit distillation~\cite{hinton2015distilling} gives
\begin{equation}
\mathcal{L}_{\mathrm{logit}}=\tau^{2}\,\KL\big(\mathrm{softmax}(\mathbf{P}_T/\tau)\,\big\|\,\mathrm{softmax}(\mathbf{P}_S/\tau)\big).
\label{eq:llogit}
\end{equation}
Teacher and student are trained together with
\begin{equation}
\begin{aligned}\mathcal{L}={}&\mathcal{L}_{\mathrm{ce}}(\mathbf{P}_T,y)+\mathcal{L}_{\mathrm{ce}}(\mathbf{P}_S,y)+\lambda_a\mathcal{L}_{\mathrm{aux}}\\&+\omega(e)\big(\lambda_1\mathcal{L}_{\mathrm{topo}}+\lambda_2\mathcal{L}_{\mathrm{rel}}+\lambda_3\mathcal{L}_{\mathrm{logit}}\big),\end{aligned}
\label{eq:total}
\end{equation}
where $\mathcal{L}_{\mathrm{ce}}$ is the cross-entropy, $\mathcal{L}_{\mathrm{aux}}$ the sum of the cross-entropies of the two auxiliary heads with $\lambda_a=0.5$, $e$ the epoch and $\omega(e)=\min(1,e/E_w)$ a ramp that delays distillation while the graphs form, as Fig.~\ref{fig:distill}(c) shows. The distillation losses update both networks: the student learns the graph, relations and predictions of the teacher, and the teacher receives the gradients of the same losses. Deep mutual learning~\cite{zhang2018deep} couples peers at the output only; here the coupling also acts on the graph through which the privileged information enters, so that the two networks can be read on one connectivity. The validation protocol selects between this bidirectional form and a one-directional variant with $\sg(\cdot)$ on all teacher terms. Training is a single stage.

At deployment, systems with an eye tracker use the teacher. The tracker's validity check marks each window; windows without valid data, including those in the TCB history, enter with their eye features set to the training mean. Systems without an eye tracker can run the teacher in the same way, with all eye features at the training mean, or the student, which needs a quarter of the computation and no buffer of past windows. For comparison, Section~\ref{sec:robustness} also evaluates a dual-mode variant that uses the student for the windows without valid eye data.

\section{Experiments}
\label{sec:experiments}

\subsection{Experimental setup}
\label{sec:setup}
SEED~\cite{zheng2015investigating} and SEED-IV~\cite{zheng2019emotionmeter} provide synchronized 62-channel EEG (ESI NeuroScan) and eye movements (SMI glasses) recorded during emotional film clips. SEED has 15 subjects with three sessions of the same 15 clips, labeled positive, neutral or negative; eye movements exist for 12 subjects (1--5 and 8--14), on which all methods are evaluated. SEED-IV has 15 subjects with three sessions of 24 different clips each, labeled happy, sad, fear or neutral. The official eye-movement features cover pupil diameter, dispersion, fixations, saccades, blinks and events ($d_e=33$ on SEED, 31 on SEED-IV).

\begin{table*}[t]
\centering
\caption{Leave-one-subject-out results on SEED and SEED-IV (mean $\pm$ standard deviation over folds, \%). Worst: accuracy on the most difficult test subject. Hist.: the $K=5$ preceding windows. Best in \textbf{bold}, second best \underline{underlined}. Markers: two-sided Wilcoxon signed-rank test on per-fold accuracies with Holm correction, at the 0.05 level, EEG-only methods against the student and all other methods against the teacher; \sigup/\sigdn: significantly higher/lower than CTD-GNet.}
\label{tab:main}
\footnotesize
\setlength{\tabcolsep}{4pt}
\begin{adjustbox}{max width=\textwidth}
\begin{tabular}{@{}lllcccccc@{}}
\toprule
\multirow{2}{*}{Category} & \multirow{2}{*}{Method} & \multirow{2}{*}{Input} & \multicolumn{3}{c}{SEED (3 classes)} & \multicolumn{3}{c}{SEED-IV (4 classes)}\\
\cmidrule(lr){4-6}\cmidrule(lr){7-9}
 & & & Accuracy & Macro-F1 & Worst & Accuracy & Macro-F1 & Worst\\
\midrule
\multirow{4}{*}{EEG only} & SVM & EEG & \pmr{64.02}{6.36} & \pmr{63.76}{6.37} & 54.51 & \pmr{43.85}{4.68}\sigup & \pmr{43.46}{4.90} & 36.61\\
 & EEGNet~\cite{lawhern2018eegnet} & EEG & \pmr{51.01}{7.86}\sigdn & \pmr{47.26}{9.39} & 35.00 & \pmr{35.57}{5.33}\sigdn & \pmr{31.99}{6.16} & 26.60\\
 & DGCNN~\cite{song2018eeg} & EEG & \pmr{63.68}{6.83} & \pmr{63.34}{7.02} & 53.68 & \pmr{42.79}{5.18} & \pmr{42.47}{5.34} & 34.07\\
 & RGNN~\cite{zhong2022eeg} & EEG & \pmr{59.09}{5.42}\sigdn & \pmr{58.16}{5.70} & 50.09 & \pmr{36.86}{3.26}\sigdn & \pmr{35.85}{3.95} & 30.42\\
\midrule
\multirow{2}{*}{Eye only} & MLP & Eye & \pmr{75.08}{9.67}\sigdn & \pmr{75.10}{9.68} & 57.28 & \pmr{71.51}{10.97}\sigdn & \pmr{71.13}{11.04} & 44.70\\
 & Random forest & Eye & \pmr{74.68}{11.27}\sigdn & \pmr{74.60}{11.17} & 57.84 & \pmr{70.71}{13.64} & \pmr{70.23}{13.73} & 38.92\\
\midrule
\multirow{9}{*}{Fusion} & Feature concatenation & EEG + Eye & \pmr{82.56}{6.87}\sigdn & \pmr{82.43}{6.90} & 66.67 & \pmr{73.41}{9.44}\sigdn & \pmr{73.30}{9.31} & 50.84\\
 & BDAE~\cite{liu2016emotion,zheng2019emotionmeter} & EEG + Eye & \pmr{82.29}{7.42} & \pmr{82.21}{7.41} & 63.79 & \pmr{73.69}{9.43} & \pmr{73.56}{9.28} & 51.99\\
 & DCCA~\cite{liu2022comparing} & EEG + Eye & \pmr{76.54}{6.55}\sigdn & \pmr{76.29}{6.57} & 62.66 & \pmr{68.36}{11.76}\sigdn & \pmr{67.45}{11.79} & 36.89\\
 & DCCA-AM~\cite{liu2022comparing} & EEG + Eye & \pmr{80.87}{6.01}\sigdn & \pmr{80.69}{6.12} & \underline{68.26} & \pmr{73.10}{8.91} & \pmr{72.97}{8.84} & 50.05\\
 & TFN~\cite{zadeh2017tensor} & EEG + Eye & \pmr{80.97}{8.90}\sigdn & \pmr{80.83}{9.02} & 56.91 & \pmr{74.74}{10.00} & \pmr{74.56}{9.86} & 49.45\\
 & LMF~\cite{liu2018efficient} & EEG + Eye & \pmr{82.10}{7.43} & \pmr{81.97}{7.52} & 63.42 & \pmr{74.44}{9.89} & \pmr{74.09}{10.07} & 49.22\\
 & MISA~\cite{hazarika2020misa} & EEG + Eye & \pmr{77.78}{8.69}\sigdn & \pmr{77.61}{8.89} & 63.13 & \pmr{71.62}{8.82}\sigdn & \pmr{71.41}{8.84} & 51.94\\
 & Cross-attention~\cite{tsai2019multimodal} & EEG + Eye & \pmr{75.53}{9.42}\sigdn & \pmr{75.35}{9.52} & 57.21 & \pmr{69.69}{9.71}\sigdn & \pmr{69.40}{9.46} & 42.79\\
 & Bimodal-LSTM~\cite{tang2017multimodal} & EEG + Eye + Hist. & \pmr{\underline{83.75}}{6.60} & \pmr{\underline{83.56}}{6.79} & 67.12 & \pmr{\underline{76.05}}{9.31} & \pmr{\underline{75.86}}{9.08} & \underline{52.12}\\
\midrule
Missing- & Cross-attention + ModDrop~\cite{neverova2016moddrop} & EEG + Eye & \pmr{78.10}{8.95}\sigdn & \pmr{77.94}{9.08} & 55.66 & \pmr{70.22}{9.99}\sigdn & \pmr{69.98}{9.84} & 43.01\\
modality & MMIN~\cite{zhao2021missing} & EEG + Eye & \pmr{82.61}{7.83} & \pmr{82.49}{7.92} & 64.16 & \pmr{75.29}{9.20} & \pmr{75.13}{9.19} & 51.46\\
\midrule
\multirow{2}{*}{Ours} & CTD-GNet teacher & EEG + Eye + Hist. & \pmr{\textbf{84.91}}{5.60} & \pmr{\textbf{84.67}}{5.75} & \textbf{74.02} & \pmr{\textbf{76.17}}{7.68} & \pmr{\textbf{76.01}}{7.52} & \textbf{54.81}\\
 & CTD-GNet student & EEG & \pmr{64.10}{6.30} & \pmr{63.63}{6.30} & 53.95 & \pmr{42.04}{5.31} & \pmr{41.88}{5.43} & 34.76\\
\bottomrule
\end{tabular}
\end{adjustbox}
\end{table*}

DE features~\cite{duan2013differential} are computed from the 200-Hz EEG on 4-s non-overlapping windows in the $\delta$, $\theta$, $\alpha$, $\beta$ and $\gamma$ bands (2526 windows per subject on SEED, 2505 on SEED-IV). No smoothing across windows is applied, so on the EEG side the context bank is the only source of temporal information. The official eye-movement features, computed on the same windows and smoothed by the providers, are used identically by all methods. Because EEG and eye-movement features shift between subjects and sessions, all methods use a session-wise calibration: each feature is standardized with the mean and standard deviation of its recording session, computed over all windows of the session without labels. The calibration is transductive, since it uses the unlabeled windows of the test session, and it is applied identically to every method.

Published cross-subject accuracies on these datasets, often above 85\% for EEG with eye movements~\cite{zhu2024eeg,jimenez2024cfda}, are obtained under protocols that differ in feature smoothing, session splits, domain adaptation to the test subject and model selection. Under a strict protocol with selection on validation subjects, the benchmarks LibEER~\cite{liu2025libeer} and LibEMER~\cite{liu2025libemer} report far lower accuracies, for example 56.89\% for the best EEG--eye model on SEED in LibEMER. We therefore re-implement all baselines under one protocol instead of quoting published numbers.

All experiments use leave-one-subject-out (LOSO) cross-validation with 12 folds on SEED and 15 on SEED-IV. In each fold, two training subjects form the validation set for early stopping and all selection; the labels of the test subject are never used. For CTD-GNet, $K$, $\tau$ and the loss weights are selected by the mean validation macro-F1 over the folds in a base configuration, which uses a temporal convolutional network (TCN) as context encoder and one-directional distillation; the context encoder (GRU or TCN) and the direction of distillation are then selected in the same way. The selected final configuration uses a GRU and bidirectional distillation. Each fold is run with three seeds; we report the mean and standard deviation over folds and test differences with the two-sided Wilcoxon signed-rank test on per-fold accuracies, with Holm correction over the baselines of each comparison~\cite{holm1979simple} at the 0.05 level.

All 17 baselines are re-implemented with the same folds, features, calibration and validation subjects. The EEG-only baselines are an SVM with an RBF kernel on the 310-dimensional DE vector, EEGNet~\cite{lawhern2018eegnet} on raw 4-s segments, and DGCNN~\cite{song2018eeg} and RGNN~\cite{zhong2022eeg} without domain adaptation; the eye-only baselines are an MLP and a random forest. The fusion baselines are feature concatenation, BDAE~\cite{liu2016emotion,zheng2019emotionmeter}, DCCA~\cite{liu2022comparing,andrew2013deep}, DCCA-AM~\cite{liu2022comparing}, TFN~\cite{zadeh2017tensor}, LMF~\cite{liu2018efficient}, MISA~\cite{hazarika2020misa}, cross-modal attention~\cite{tsai2019multimodal} over 62 electrode tokens and one eye token, and Bimodal-LSTM~\cite{tang2017multimodal} with the same $K=5$ preceding windows as the teacher. The missing-modality baselines are cross-attention with ModDrop~\cite{neverova2016moddrop} at a drop rate of 0.5 and MMIN~\cite{zhao2021missing}. For EEG-only inference, the CTD-GNet student is also trained with vanilla KD~\cite{hinton2015distilling}, FitNets~\cite{romero2015fitnets}, RKD~\cite{park2019relational}, SP~\cite{tung2019similarity} and LSP~\cite{yang2020distilling}. On the validation subjects, the SVM~\cite{cortes1995support} selects $C\in\{0.1,1,10\}$ and the random forest~\cite{breiman2001random} of 500 trees its maximum depth from \{unlimited, 10, 20\}, and each distillation baseline its loss weight from $\{0.1,0.5,1,2\}$. The neural baselines follow the architectures of their original papers and share the optimizer, schedule, early stopping and validation subjects with CTD-GNet. The search of CTD-GNet covers $K$, $\tau$ and three loss weights, over which its validation accuracy varies by less than half a point, as Section~\ref{sec:ablation} shows.

The teacher has three graph layers of width $d_h=64$ with $d_k=32$, $d_s=32$ and $d_c=d_r=64$; the student has two layers of width 32 with $d_k=16$, and both use a dropout rate of 0.3. They are trained jointly with AdamW~\cite{loshchilov2019decoupled} at a learning rate of $10^{-3}$, weight decay $10^{-2}$ and batch size 128, with a cosine decay to $10^{-5}$ over at most 100 epochs, mixup~\cite{zhang2018mixup} with $\alpha=0.4$ on windows and histories, and an exponential moving average of the weights with decay 0.995~\cite{tarvainen2017mean}; each network keeps its epoch with the best validation macro-F1, with a patience of 20 epochs. The distillation ramp lasts $E_w=10$ epochs and $\tau_r=0.1$. The search covers $\lambda_i\in\{0.5,1,2\}$, $\tau\in\{2,4,8\}$ and $K\in\{3,5,7,9\}$; it selects $K=5$ on both datasets, $\tau=4$ and $(\lambda_1,\lambda_2,\lambda_3)=(2,2,2)$ on SEED, and $\tau=2$ and $(0.5,2,0.5)$ on SEED-IV.

\subsection{Comparison with existing methods}
\label{sec:main_results}
Table~\ref{tab:main} reports the results with complete eye data. The CTD-GNet teacher has the highest mean accuracy on both datasets, 84.91\% on SEED and 76.17\% on SEED-IV, and the highest macro-F1. With Holm correction for multiple comparisons, it is significantly more accurate than 9 of the 13 eye-only, fusion and missing-modality baselines on SEED and 6 of them on SEED-IV, while its margin over Bimodal-LSTM and MMIN is not significant. Its advantage is largest where the baselines are weakest: on its most difficult test subject it reaches 74.02\% on SEED and 54.81\% on SEED-IV, 5.8 and 2.7 points above the worst-subject accuracy of any baseline, and it has the smallest standard deviation across subjects among multimodal models, 5.60 and 7.68 points. Fig.~\ref{fig:subjects} shows where this consistency comes from. On SEED, the teacher has the highest lower quartile and the highest worst-subject accuracy of all models; on SEED-IV, its two most difficult test subjects reach 54.8\% and 66.6\%, against 52.1\% and 58.7\% for Bimodal-LSTM and 51.5\% and 58.5\% for MMIN. Table~\ref{tab:f1} gives the F1 score of each class: the teacher is the best model for all three emotions on SEED and for neutral and happy on SEED-IV.

\begin{figure*}[t]
\centering
\includegraphics{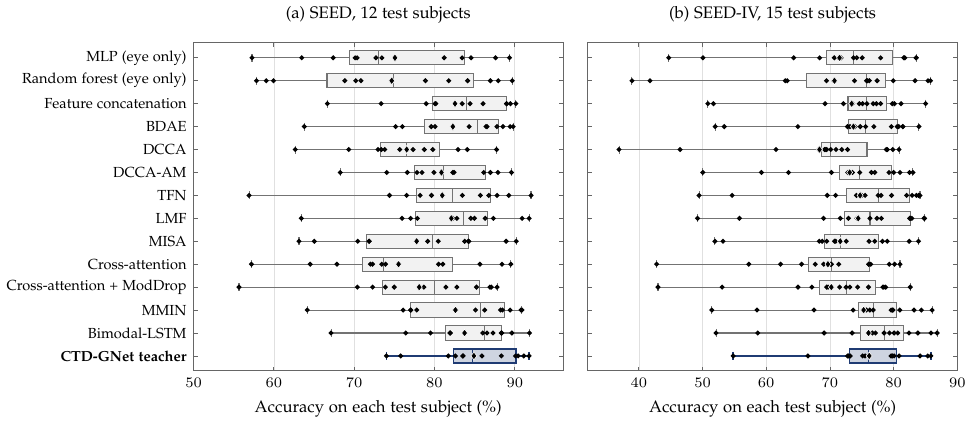}\ {}
\caption{Accuracy on each test subject for the 14 models that use eye movements. Boxes span the quartiles over the test subjects, whiskers reach the worst and the best subject, dots mark the individual subjects and diamonds the mean.}
\label{fig:subjects}
\end{figure*}

\begin{table*}[t]
\centering
\caption{F1 score of each class (\%), mean over the test subjects. Macro: mean over the classes, as in Table~\ref{tab:main}. The best value in each column is in \textbf{bold}.}
\label{tab:f1}
\footnotesize
\setlength{\tabcolsep}{5pt}
\begin{tabular}{@{}lccccccccc@{}}
\toprule
\multirow{2}{*}{Method} & \multicolumn{4}{c}{SEED} & \multicolumn{5}{c}{SEED-IV}\\
\cmidrule(lr){2-5}\cmidrule(lr){6-10}
 & Negative & Neutral & Positive & Macro & Neutral & Sad & Fear & Happy & Macro\\
\midrule
SVM & 54.8 & 57.3 & 79.3 & 63.8 & 45.9 & 43.7 & 38.7 & 45.6 & 43.5\\
EEGNet~\cite{lawhern2018eegnet} & 38.1 & 43.1 & 60.6 & 47.3 & 34.2 & 29.1 & 26.5 & 38.2 & 32.0\\
DGCNN~\cite{song2018eeg} & 53.4 & 57.5 & 79.1 & 63.3 & 43.7 & 42.0 & 38.5 & 45.6 & 42.5\\
RGNN~\cite{zhong2022eeg} & 44.5 & 51.6 & 78.3 & 58.2 & 40.3 & 33.0 & 29.5 & 40.5 & 35.9\\
\midrule
MLP (eye only) & 71.2 & 80.6 & 73.5 & 75.1 & 76.0 & 63.2 & 78.3 & 67.0 & 71.1\\
Random forest (eye only) & 71.8 & 78.8 & 73.2 & 74.6 & 75.0 & 60.8 & 78.0 & 67.0 & 70.2\\
\midrule
Feature concatenation & 78.7 & 82.5 & 86.0 & 82.4 & 77.0 & 65.2 & 78.4 & 72.6 & 73.3\\
BDAE~\cite{liu2016emotion,zheng2019emotionmeter} & 79.2 & 82.5 & 84.9 & 82.2 & 77.9 & 65.1 & 77.9 & 73.3 & 73.6\\
DCCA~\cite{liu2022comparing} & 71.8 & 76.7 & 80.3 & 76.3 & 74.5 & 57.3 & 76.4 & 61.6 & 67.5\\
DCCA-AM~\cite{liu2022comparing} & 77.1 & 80.2 & 84.7 & 80.7 & 79.1 & 63.2 & 76.9 & 72.7 & 73.0\\
TFN~\cite{zadeh2017tensor} & 77.9 & 81.9 & 82.7 & 80.8 & 78.9 & 66.9 & 78.9 & 73.6 & 74.6\\
LMF~\cite{liu2018efficient} & 79.1 & 82.5 & 84.4 & 82.0 & 78.3 & 66.0 & 78.8 & 73.2 & 74.1\\
MISA~\cite{hazarika2020misa} & 74.6 & 81.2 & 77.0 & 77.6 & 77.4 & 62.3 & 76.1 & 69.9 & 71.4\\
Cross-attention~\cite{tsai2019multimodal} & 72.8 & 78.6 & 74.7 & 75.3 & 76.6 & 60.2 & 75.1 & 65.7 & 69.4\\
Bimodal-LSTM~\cite{tang2017multimodal} & 80.3 & 83.4 & 86.9 & 83.6 & 80.8 & \textbf{68.5} & \textbf{79.3} & 74.8 & 75.9\\
\midrule
Cross-attention + ModDrop~\cite{neverova2016moddrop} & 74.8 & 79.8 & 79.3 & 77.9 & 75.8 & 61.3 & 75.2 & 67.6 & 70.0\\
MMIN~\cite{zhao2021missing} & 79.7 & 83.3 & 84.5 & 82.5 & 79.5 & 67.4 & \textbf{79.3} & 74.3 & 75.1\\
\midrule
CTD-GNet teacher & \textbf{81.2} & \textbf{84.0} & \textbf{88.8} & \textbf{84.7} & \textbf{82.3} & 67.4 & 79.2 & \textbf{75.3} & \textbf{76.0}\\
CTD-GNet student & 53.8 & 57.0 & 80.1 & 63.6 & 43.0 & 40.7 & 38.2 & 45.6 & 41.9\\
\bottomrule
\end{tabular}
\end{table*}

The strongest baseline, Bimodal-LSTM, also reads the preceding windows, which points to the value of temporal context; CTD-GNet uses this context to shape the electrode graph and is more consistent, with a worst-subject accuracy 6.9 and 2.7 points higher. The teacher exceeds the eye-only MLP by 9.8 and 4.7 points, the largest multimodal gain of all models, so the graph extracts EEG information that complements the eye movements. The student, which sees only the current EEG window, reaches 64.10\% on SEED and 42.04\% on SEED-IV, on par with DGCNN, and needs neither an eye tracker nor a window buffer.

\begin{table}[t]
\centering
\caption{Prediction without eye movements (accuracy and macro-F1, \%). Upper block: student trained with different distillation losses in the base configuration. Middle block: multimodal models tested with all eye movements removed. Lower block: CTD-GNet in the final configuration.}
\label{tab:kd}
\footnotesize
\setlength{\tabcolsep}{4pt}
\begin{tabular}{@{}lcccc@{}}
\toprule
\multirow{2}{*}{Method} & \multicolumn{2}{c}{SEED} & \multicolumn{2}{c}{SEED-IV}\\
\cmidrule(lr){2-3}\cmidrule(lr){4-5}
 & Acc & F1 & Acc & F1\\
\midrule
Student without distillation & 63.85 & 63.22 & 41.93 & 41.58\\
Vanilla KD~\cite{hinton2015distilling} & 64.46 & 63.84 & 41.85 & 41.54\\
FitNets~\cite{romero2015fitnets} & 64.00 & 63.55 & 41.99 & 41.77\\
RKD~\cite{park2019relational} & 63.92 & 63.25 & 41.55 & 41.17\\
SP~\cite{tung2019similarity} & 63.61 & 63.04 & 42.08 & 41.78\\
LSP~\cite{yang2020distilling} & 63.76 & 63.29 & 42.40 & \textbf{42.23}\\
Topology distillation (ours) & 63.96 & 63.20 & 42.49 & 42.19\\
\midrule
Cross-attention + ModDrop~\cite{neverova2016moddrop} & 62.00 & 61.02 & 38.31 & 36.41\\
MMIN~\cite{zhao2021missing} & 62.76 & 62.26 & 42.61 & 42.19\\
Bimodal-LSTM~\cite{tang2017multimodal} & 61.04 & 57.47 & 37.22 & 28.15\\
\midrule
Student without distillation & 63.66 & 63.23 & 41.98 & 41.74\\
Student, bidirectional distillation & 64.10 & 63.63 & 42.04 & 41.88\\
Teacher, eye movements removed & \textbf{66.10} & \textbf{65.33} & \textbf{43.77} & 40.59\\
\bottomrule
\end{tabular}
\end{table}

Table~\ref{tab:kd} compares predictions without eye movements; each distillation baseline is listed with the loss weight selected on the validation subjects. Topology distillation gives the most accurate distilled student on SEED-IV; its main benefits, the graph alignment and, in the bidirectional form, the gain of the teacher, are analyzed in Sections~\ref{sec:ablation} and~\ref{sec:visualization}. The student outperforms ModDrop and, unlike MMIN, needs no reconstruction network. Without its eye input, Bimodal-LSTM, the strongest baseline with complete data, falls below every student, to 61.04\% on SEED and 37.22\% on SEED-IV, whereas the CTD-GNet teacher, whose graph and context are computed from EEG, keeps 66.10\% and 43.77\%, the highest accuracy in Table~\ref{tab:kd}, although its macro-F1 on SEED-IV is below that of all students and MMIN. A device without an eye tracker can therefore run the teacher with mean eye features or, at a quarter of the computation and without a buffer of past windows, the student.

Table~\ref{tab:efficiency} compares the computational cost. The student needs 24\% of the teacher's FLOPs and processes a window in 0.21\,ms on one CPU core, three times faster than the teacher. The teacher has 59\% of the parameters of Bimodal-LSTM, 39\% of those of MMIN and 39\% of the FLOPs of cross-attention, and all models run in under 1\,ms per window, far within the 4-s update interval.

\begin{table}[t]
\centering
\caption{Computational cost per window on SEED-IV ($K=5$). FLOPs count a multiply-accumulate as two operations, with recurrent layers counted analytically; latency (ms) is measured at batch size one on one core of an Intel Xeon Gold 6430 CPU and on an NVIDIA RTX 4090 GPU. Hist.: $K$ past windows.}
\label{tab:efficiency}
\footnotesize
\setlength{\tabcolsep}{3pt}
\begin{tabular}{@{}llcccc@{}}
\toprule
Model & Input & Params & FLOPs & CPU & GPU\\
\midrule
DGCNN~\cite{song2018eeg} & EEG & 131.5k & 0.79M & 0.09 & 0.20\\
RGNN~\cite{zhong2022eeg} & EEG & 2.6k & 0.12M & 0.08 & 0.20\\
DCCA~\cite{liu2022comparing} & EEG+Eye & 120.7k & 0.24M & 0.08 & 0.15\\
TFN~\cite{zadeh2017tensor} & EEG+Eye & 204.6k & 0.41M & 0.09 & 0.16\\
LMF~\cite{liu2018efficient} & EEG+Eye & 65.3k & 0.13M & 0.10 & 0.21\\
MISA~\cite{hazarika2020misa} & EEG+Eye & 125.5k & 0.44M & 0.16 & 0.29\\
Cross-attention~\cite{tsai2019multimodal} & EEG+Eye & 174.1k & 12.43M & 0.88 & 0.92\\
MMIN~\cite{zhao2021missing} & EEG+Eye & 395.5k & 0.52M & 0.17 & 0.32\\
Bimodal-LSTM~\cite{tang2017multimodal} & EEG+Eye+Hist. & 262.7k & 3.01M & 0.36 & 0.21\\
CTD-GNet teacher & EEG+Eye+Hist. & 155.0k & 4.82M & 0.63 & 0.89\\
CTD-GNet student & EEG & 74.3k & 1.15M & 0.21 & 0.37\\
\bottomrule
\end{tabular}
\end{table}

A recognizer in daily use meets new people and new stimuli at the same time. Because every subject of SEED and SEED-IV watches the same film clips~\cite{kilgallen2025repeated}, and the pupil responds to the luminance of a clip as well as to arousal~\cite{bradley2008pupil}, we also evaluate a stimulus-disjoint protocol that tests both kinds of generalization together. In each fold, one clip per emotion, on SEED-IV one per emotion and session, is removed from all training and validation subjects, and the test subject is evaluated on these clips only; the held-out clips rotate over the folds. Table~\ref{tab:stimulus} reports the accuracy on these unseen clips. On SEED, the CTD-GNet teacher is the most accurate model, 77.1\%, and its lead widens under this protocol: over Bimodal-LSTM from 1.2 points in Table~\ref{tab:main} to 4.5 points, and over MMIN from 2.3 to 5.9 points. With Holm correction, it is significantly more accurate than the eye-only MLP and DGCNN, $p=0.017$ for both, whereas its lead over the fusion models is not significant, $p\geq0.19$. On SEED-IV, it matches the best model, Bimodal-LSTM, with 62.1\% against 62.3\%, and is 3.2 points above MMIN. All fusion models stay far above EEG alone, by about 20 points on SEED-IV, so the eye movements carry emotion information that transfers to new stimuli.

\begin{table}[t]
\centering
\caption{Generalization to new subjects and new stimuli: accuracy (\%) on film clips that no training or validation subject has seen. In each fold, one clip per emotion, on SEED-IV one per emotion and session, is held out from all training and validation subjects.}
\label{tab:stimulus}
\footnotesize
\setlength{\tabcolsep}{6pt}
\begin{tabular}{@{}lcc@{}}
\toprule
Method & SEED & SEED-IV\\
\midrule
DGCNN~\cite{song2018eeg} & 63.4 & 39.8\\
MLP (eye only) & 63.2 & 57.6\\
\midrule
Feature concatenation & 70.2 & 61.7\\
Bimodal-LSTM~\cite{tang2017multimodal} & 72.6 & 62.3\\
MMIN~\cite{zhao2021missing} & 71.2 & 58.9\\
\midrule
CTD-GNet student & 64.2 & 38.9\\
CTD-GNet teacher & \textbf{77.1} & 62.1\\
\bottomrule
\end{tabular}
\end{table}

\begin{table}[t]
\centering
\caption{Ablation of the teacher (accuracy and macro-F1, \%). Upper block: the final configuration and variants of it. Lower block: components removed from the base configuration, which uses a TCN context encoder and one-directional distillation. $^{\dagger}$:~$p<0.05$ against the first row of the block, two-sided Wilcoxon signed-rank test on per-fold accuracies, uncorrected; with Holm correction within each block, no difference reaches the 0.05 level.}
\label{tab:ablation_teacher}
\footnotesize
\setlength{\tabcolsep}{3.5pt}
\begin{tabular}{@{}lcccc@{}}
\toprule
\multirow{2}{*}{Variant} & \multicolumn{2}{c}{SEED} & \multicolumn{2}{c}{SEED-IV}\\
\cmidrule(lr){2-3}\cmidrule(lr){4-5}
 & Acc & F1 & Acc & F1\\
\midrule
Final (GRU, bidirectional) & 84.91 & 84.67 & 76.17 & 76.01\\
\quad TCN instead of GRU & 83.92$^{\dagger}$ & 83.68 & 75.65 & 75.43\\
\quad One-directional distillation & 83.71$^{\dagger}$ & 83.52 & 75.90 & 75.74\\
\quad No distillation & 83.71$^{\dagger}$ & 83.53 & 75.82 & 75.65\\
\quad Bidirectional, logit loss only & 84.57 & 84.35 & 76.25 & 76.08\\
\quad Bidirectional, topology loss only & 83.74 & 83.55 & 75.98 & 75.79\\
\quad Eye movements only in the graph & 84.81 & 84.60 & 75.97 & 75.82\\
\quad Eye movements only in the readout & 84.45 & 84.27 & 76.40 & 76.19\\
\midrule
Base (TCN, one-directional) & 82.19 & 82.00 & 75.71 & 75.53\\
\quad w/o temporal context & 81.09 & 80.98 & 73.78$^{\dagger}$ & 73.50\\
\quad w/o both prompts & 81.74 & 81.60 & 73.75 & 73.52\\
\quad Static learnable adjacency & 80.45$^{\dagger}$ & 80.28 & 74.32 & 74.09\\
\quad w/o mixup & 82.12 & 82.03 & 74.90 & 74.60\\
\quad w/o weight averaging & 81.87 & 81.68 & 74.98 & 74.82\\
\bottomrule
\end{tabular}
\end{table}

\subsection{Ablation study}
\label{sec:ablation}
The upper part of Table~\ref{tab:ablation_teacher} varies the final configuration. Replacing the GRU by a TCN, making the distillation one-directional or removing it lowers the teacher on both datasets, by 1.0 to 1.2 points on SEED, with uncorrected $p$ between 0.027 and 0.044, and by 0.3 to 0.5 points on SEED-IV. One-directional distillation matches no distillation, because the stop-gradient isolates the teacher. Coupling only the predictions in both directions gives 84.57\% and 76.25\%, close to the full objective, whereas coupling only the graphs leaves the teacher at the level without distillation: the gain of the teacher is consistent with mutual learning through the predictions~\cite{zhang2018deep}, and the topology coupling provides the 35-fold graph alignment of Section~\ref{sec:visualization}, which is its direct target. The eye movements can reach the teacher through either route. With the eye movements only in the graph, through the edge prompts, the node pathway and the context bank, or only in the readout, the teacher stays within 0.5 points of the full model on both datasets. Modulating the EEG graph thus carries the information of the eye movements without any eye features in the readout, which is the premise of CTD-GNet.

The lower part removes components from the base configuration, and every removal lowers the accuracy on both datasets. The largest effects come from the components that make the graph change with every window: without the temporal context the teacher loses 1.1 and 1.9 points, a static adjacency shared by all windows, as in DGCNN, costs 1.7 and 1.4 points, and removing both prompts costs 0.5 and 2.0 points, so the graph should follow both the current window and its history. Mixup and weight averaging add 0.1 to 0.8 points. Fig.~\ref{fig:position} shows that on SEED the accuracy rises during the first minutes of a trial, for the teacher as for the student, as the induced emotion builds up.

\begin{figure}[t]
\centering
\includegraphics{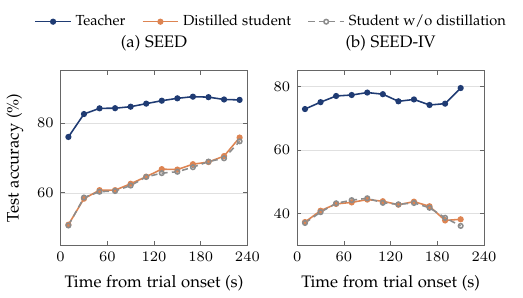}\ {}%
\caption{Test accuracy versus the position of the window within the trial, in 20-s bins plotted at the bin centre, over all folds and seeds. Bins with fewer than 1000 test windows are omitted.}
\label{fig:position}
\end{figure}

The hyperparameters have small effects, as Fig.~\ref{fig:sensitivity} shows: over histories of three to nine windows, over $\tau$ and over all 27 combinations of the loss weights, the validation accuracy varies by less than half a point, so CTD-GNet needs no careful tuning.

\begin{figure*}[t]
\centering
\includegraphics{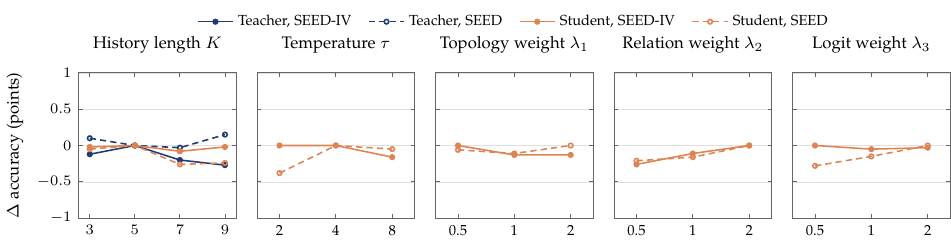}\ {}
\caption{Validation accuracy over the hyperparameter grid, relative to the selected configuration, which lies at zero. Each panel varies one hyperparameter and keeps the others at their selected values. The temperature and the loss weights are searched in the base configuration, whose stop-gradient leaves the teacher unaffected by them, so only the student is shown in these panels.}
\label{fig:sensitivity}
\end{figure*}

\begin{figure*}[t]
\centering
\includegraphics{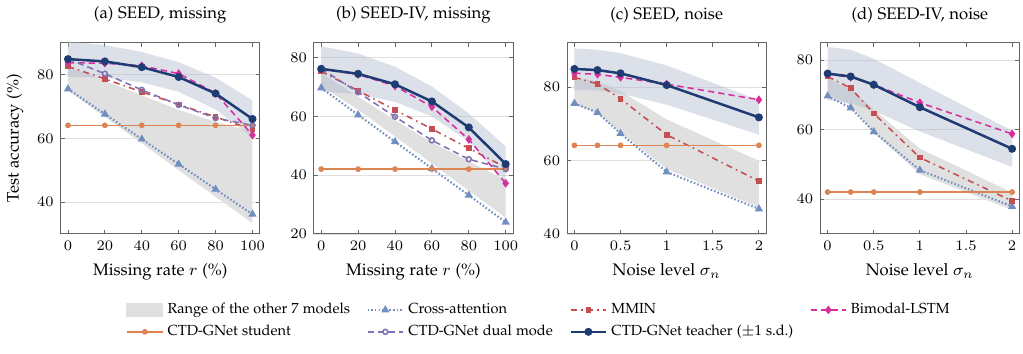}\ {}
\caption{Accuracy under missing (a, b) and noisy (c, d) eye-movement data, mean over the test subjects. Blue band: teacher $\pm$ one standard deviation over the test subjects. Grey band: range of the other seven models that use eye movements, namely the eye-only MLP, feature concatenation, DCCA-AM, TFN, LMF, MISA and ModDrop. The dual mode uses the student for windows without eye data; under noise, it differs from the teacher only on the few windows whose eye data are invalid in the recordings.}
\label{fig:robust}
\end{figure*}

\begin{table*}[t]
\centering
\caption{Accuracy (\%) under missing and noisy eye-movement data, mean over the test subjects and five random draws. The results without corruption are those of Table~\ref{tab:main}. The best value in each column is in \textbf{bold}.}
\label{tab:robust}
\footnotesize
\setlength{\tabcolsep}{5pt}
\begin{tabular}{@{}llccccccccc@{}}
\toprule
\multirow{2}{*}{Dataset} & \multirow{2}{*}{Method} & \multicolumn{5}{c}{Missing rate $r$} & \multicolumn{4}{c}{Noise level $\sigma_n$}\\
\cmidrule(lr){3-7}\cmidrule(lr){8-11}
 & & 20\% & 40\% & 60\% & 80\% & 100\% & 0.25 & 0.5 & 1 & 2\\
\midrule
\multirow{13}{*}{SEED} & MLP (eye only) & 66.7 & 58.5 & 50.0 & 41.7 & 33.4 & 73.0 & 67.8 & 57.7 & 47.2\\
\cmidrule(l){2-11}
 & Feature concatenation & 78.1 & 73.5 & 69.0 & 64.4 & 60.0 & 81.6 & 78.9 & 71.2 & 58.5\\
 & DCCA-AM~\cite{liu2022comparing} & 76.4 & 71.8 & 67.3 & 62.8 & 58.3 & 79.6 & 76.5 & 69.4 & 59.9\\
 & TFN~\cite{zadeh2017tensor} & 75.9 & 70.7 & 65.5 & 60.4 & 55.3 & 79.6 & 75.8 & 67.3 & 56.6\\
 & LMF~\cite{liu2018efficient} & 76.9 & 71.6 & 66.3 & 61.1 & 55.9 & 80.1 & 76.0 & 67.2 & 56.5\\
 & MISA~\cite{hazarika2020misa} & 71.1 & 64.5 & 57.7 & 51.1 & 44.5 & 74.9 & 69.3 & 59.5 & 49.8\\
 & Cross-attention~\cite{tsai2019multimodal} & 67.6 & 59.8 & 51.9 & 44.0 & 36.2 & 73.0 & 67.3 & 56.9 & 46.8\\
 & Bimodal-LSTM~\cite{tang2017multimodal} & 83.6 & \textbf{82.8} & \textbf{80.3} & \textbf{74.2} & 61.0 & 83.4 & 82.6 & \textbf{80.8} & \textbf{76.5}\\
\cmidrule(l){2-11}
 & Cross-attention + ModDrop~\cite{neverova2016moddrop} & 74.9 & 71.6 & 68.4 & 65.2 & 62.0 & 75.7 & 70.5 & 59.7 & 48.1\\
 & MMIN~\cite{zhao2021missing} & 78.7 & 74.6 & 70.7 & 66.7 & 62.8 & 80.8 & 76.8 & 67.0 & 54.4\\
\cmidrule(l){2-11}
 & CTD-GNet student & 64.1 & 64.1 & 64.1 & 64.1 & 64.1 & 64.1 & 64.1 & 64.1 & 64.1\\
 & CTD-GNet dual mode & 80.3 & 75.2 & 70.5 & 66.4 & 64.1 & \textbf{84.6} & \textbf{83.7} & 80.5 & 71.7\\
 & CTD-GNet teacher & \textbf{84.2} & 82.4 & 79.3 & 74.1 & \textbf{66.1} & \textbf{84.6} & \textbf{83.7} & 80.5 & 71.7\\
\midrule
\multirow{13}{*}{SEED-IV} & MLP (eye only) & 62.3 & 53.1 & 44.0 & 34.8 & 25.7 & 68.6 & 61.5 & 49.1 & 37.7\\
\cmidrule(l){2-11}
 & Feature concatenation & 65.7 & 57.9 & 50.2 & 42.5 & 34.8 & 71.3 & 65.8 & 54.6 & 41.9\\
 & DCCA-AM~\cite{liu2022comparing} & 65.0 & 57.0 & 48.9 & 40.8 & 32.8 & 70.2 & 63.6 & 52.1 & 40.9\\
 & TFN~\cite{zadeh2017tensor} & 66.7 & 58.7 & 50.7 & 42.7 & 34.7 & 71.7 & 65.1 & 52.8 & 40.6\\
 & LMF~\cite{liu2018efficient} & 66.4 & 58.4 & 50.3 & 42.3 & 34.3 & 71.2 & 64.1 & 51.6 & 39.8\\
 & MISA~\cite{hazarika2020misa} & 62.4 & 53.2 & 44.0 & 34.7 & 25.5 & 67.3 & 59.2 & 47.2 & 36.9\\
 & Cross-attention~\cite{tsai2019multimodal} & 60.5 & 51.4 & 42.3 & 33.1 & 24.0 & 66.3 & 59.4 & 48.3 & 37.9\\
 & Bimodal-LSTM~\cite{tang2017multimodal} & 74.3 & 70.4 & 63.2 & 52.1 & 37.2 & 75.1 & \textbf{73.0} & \textbf{67.8} & \textbf{58.8}\\
\cmidrule(l){2-11}
 & Cross-attention + ModDrop~\cite{neverova2016moddrop} & 63.8 & 57.4 & 51.1 & 44.7 & 38.3 & 67.1 & 60.3 & 49.0 & 38.3\\
 & MMIN~\cite{zhao2021missing} & 68.7 & 62.2 & 55.7 & 49.1 & 42.6 & 72.0 & 64.7 & 52.0 & 39.5\\
\cmidrule(l){2-11}
 & CTD-GNet student & 42.0 & 42.0 & 42.0 & 42.0 & 42.0 & 42.0 & 42.0 & 42.0 & 42.0\\
 & CTD-GNet dual mode & 68.2 & 59.9 & 51.8 & 45.4 & 42.0 & \textbf{75.3} & \textbf{73.0} & 66.5 & 54.5\\
 & CTD-GNet teacher & \textbf{74.5} & \textbf{70.9} & \textbf{65.1} & \textbf{56.2} & \textbf{43.8} & \textbf{75.3} & 72.9 & 66.5 & 54.5\\
\bottomrule
\end{tabular}
\end{table*}

\subsection{Robustness to missing and noisy eye data}
\label{sec:robustness}
At test time, a fraction $r\in\{0,20,\dots,100\}\%$ of the windows lose their eye features, which are replaced by the training mean, also in the TCB history; in a second test, Gaussian noise with $\sigma_n\in\{0,0.25,0.5,1,2\}$ in standardized units is added to all eye features. Each setting is repeated with five random draws. Fig.~\ref{fig:robust} plots the accuracy, and Table~\ref{tab:robust} lists the values of the main models.

With missing eye data, the teacher is the most accurate model at every missing rate on SEED-IV, and at 80\% it exceeds Bimodal-LSTM by 4.1 points and MMIN by 7.1 points, as Fig.~\ref{fig:robust}(a)--(b) shows. On SEED, it is the most accurate model with complete eye data, at 20\% and when all eye data are missing. Models that read only the current window degrade almost linearly; without eye data, cross-attention falls to 36.2\% on SEED and 24.0\% on SEED-IV. The teacher degrades gracefully, likely because its context and eye cue still hold the valid eye data of preceding windows and its graph always carries the EEG evidence; without any eye data it keeps 66.1\% on SEED, more than any EEG-only model, and 43.8\% on SEED-IV. Whenever eye data are missing, the teacher alone is more accurate than the dual-mode variant, so the full system needs no switching mechanism.

With noisy eye data, the teacher is the most accurate model up to $\sigma_n=0.5$ on SEED and at $\sigma_n=0.25$ on SEED-IV, and second only to Bimodal-LSTM at the higher noise levels. At $\sigma_n=1$, it keeps 80.5\% on SEED and 66.5\% on SEED-IV, 9.3 and 11.9 points above every single-window fusion and missing-modality model in Table~\ref{tab:robust}. Two design choices likely contribute: averaging the eye cue over windows attenuates independent noise, and in the edge and node routes corrupted eye features can only re-weight the edges and gate the node terms of an EEG graph, with the row-wise softmax and the log-prior bounding their influence on the edges.

\subsection{Analysis of the learned graphs}
\label{sec:visualization}

\begin{figure}[!t]
\centering
\includegraphics[width=\columnwidth,page=3]{Results-crop.pdf}
\setlength{\normallineskiplimit}{1pt}\setlength{\normallineskip}{1.5pt}\setlength{\lineskiplimit}{1pt}\setlength{\lineskip}{1.5pt}%
\caption{Learned graphs per emotion on SEED-IV. Rows: teacher topology $\mathbf{A}_T$, distilled student topology $\mathbf{A}_S$, student topology without distillation, and eye-movement attribution $\Delta\mathbf{A}=\mathbb{E}_t[\mathbf{A}_T-\mathbf{A}_T^{(\mathbf{x}_e=\mathbf{0})}]$, where the eye features of all windows are set to their calibrated mean of zero; red edges gain and blue edges lose weight through the eye movements. Each panel shows the 5\% strongest edges averaged over the test windows of the class.}
\label{fig:topology}
\end{figure}

\begin{figure}[!t]
\centering
\includegraphics[width=0.75\columnwidth,page=4]{Results-crop.pdf}
\caption{Learned graphs per emotion on SEED, drawn as in Fig.~\ref{fig:topology}.}
\label{fig:topology_seed}
\end{figure}

\begin{figure*}[t]
\centering
\includegraphics{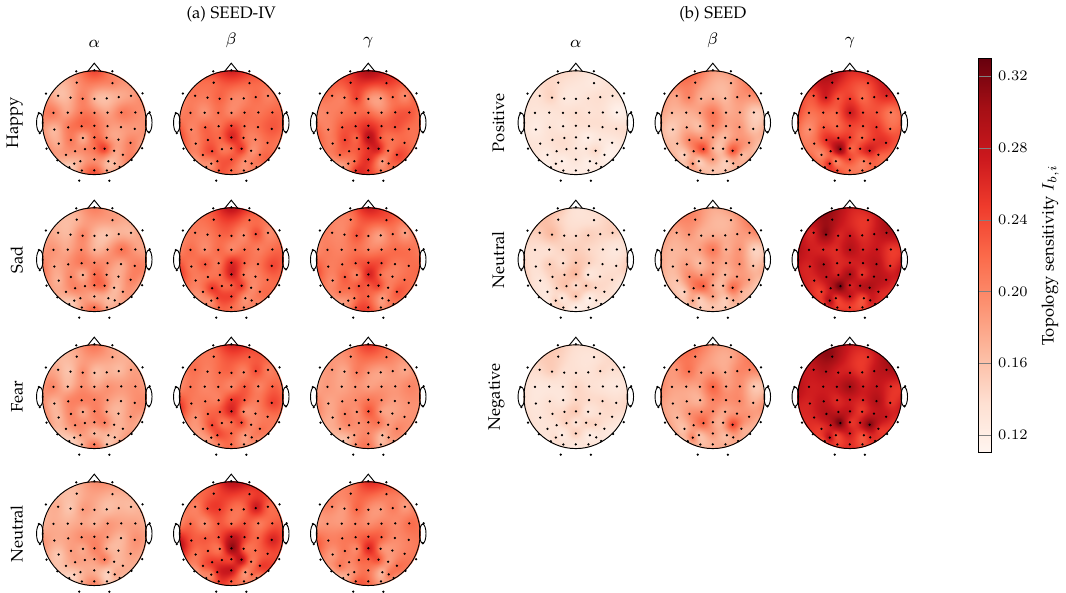}\ {}
\caption{Band-wise topology sensitivity $I_{b,i}$ of the teacher, defined in Eq.~\eqref{eq:bandsens}, for the $\alpha$, $\beta$ and $\gamma$ bands and each emotion, averaged over the test windows of all folds and seeds. The values at the 62 electrodes (dots) are interpolated with a thin-plate spline; nose up, left hemisphere on the left, one colour scale for both datasets.}
\label{fig:bands}
\end{figure*}

\begin{table}[t]
\centering
\caption{Agreement between the teacher graph $\mathbf{A}_T$ and the student graph $\mathbf{A}_S$ on the test windows: mean row-wise KL divergence $\KL(\mathbf{A}_{T,i}\|\mathbf{A}_{S,i})$ and Jaccard overlap of the 10\% strongest edges, mean $\pm$ standard deviation over the 36 (SEED) and 45 (SEED-IV) runs of all folds and seeds.}
\label{tab:align}
\footnotesize
\setlength{\tabcolsep}{3.5pt}
\begin{tabular}{@{}lcccc@{}}
\toprule
\multirow{2}{*}{Distillation} & \multicolumn{2}{c}{SEED} & \multicolumn{2}{c}{SEED-IV}\\
\cmidrule(lr){2-3}\cmidrule(lr){4-5}
 & KL $\downarrow$ & Jaccard $\uparrow$ & KL $\downarrow$ & Jaccard $\uparrow$\\
\midrule
None & \pmr{7.55}{1.42} & \pmr{0.209}{0.047} & \pmr{7.90}{1.47} & \pmr{0.093}{0.030}\\
Bidirectional &\pmr{\textbf{0.21}}{0.09} & \pmr{\textbf{0.414}}{0.131} & \pmr{\textbf{0.23}}{0.06} & \pmr{\textbf{0.226}}{0.063}\\
\bottomrule
\end{tabular}
\end{table}

Bidirectional distillation brings the teacher and student graphs close together, as Table~\ref{tab:align} shows. Compared with a student trained without distillation, the mean row-wise KL divergence between $\mathbf{A}_T$ and $\mathbf{A}_S$ falls from 7.90 to 0.23 on SEED-IV and from 7.55 to 0.21 on SEED, and the Jaccard overlap of the 10\% strongest edges rises from 0.093 to 0.226 and from 0.209 to 0.414. From a single EEG window, the distilled student shares 2.0 to 2.4 times as many of the teacher's strongest edges as the student without distillation, so both networks can be read on one connectivity.

On SEED-IV, the teacher graph of each emotion is organized around bilateral hubs at the lateral temporal electrodes, as Fig.~\ref{fig:topology} shows, and for sadness and fear a centro-parietal midline hub collects edges from across the scalp. The distilled student shares these hubs, whereas the student without distillation concentrates its edges on one left temporal electrode for every emotion. On SEED, Fig.~\ref{fig:topology_seed} shows that the teacher and the distilled student share bilateral hubs at the lateral temporal electrodes, the region that Zheng and Lu~\cite{zheng2015investigating} found most discriminative.

In the eye-movement attribution $\Delta\mathbf{A}$ of Figs.~\ref{fig:topology} and~\ref{fig:topology_seed}, the edges that lose weight link the temporal hubs to frontal and central electrodes. Table~\ref{tab:regions} quantifies this reorganization by the attention that the electrodes of each scalp region receive, $\frac{100}{N}\sum_{i}\sum_{j\in\mathcal{R}}\Delta A_{ij}$ for a region $\mathcal{R}$; because every row of $\mathbf{A}_T$ sums to one, the changes sum to zero over the regions. On both datasets and for every emotion, the eye movements take attention away from the temporal electrodes, by 1.30 to 2.43 points on SEED and 0.24 to 0.55 points on SEED-IV, and give it to the frontal electrodes, by 0.49 to 0.95 and 0.27 to 0.58 points; the occipital electrodes gain in six of the seven emotion classes. This shift toward frontal electrodes, near the eyes and the frontal eye fields of the dorsal attention network~\cite{corbetta2002control}, and toward occipital electrodes over the visual cortex is consistent with the involvement of these regions in eye movements and visual processing.

\begin{table}[t]
\centering
\setlength{\normallineskiplimit}{1pt}\setlength{\normallineskip}{1.5pt}\setlength{\lineskiplimit}{1pt}\setlength{\lineskip}{1.5pt}%
\caption{Change of the attention received by each scalp region through the eye movements, $\frac{100}{N}\sum_i\sum_{j\in\mathcal{R}}\Delta A_{ij}$ (points), for the teacher graph averaged over the test windows of each emotion. Frontal: FP, AF and F electrodes (14); central: FC and C (14); temporal: FT7, FT8, T7, T8, TP7 and TP8 (6); parietal: CP and P (16); occipital: PO, O and CB (12). Each row sums to zero.}
\label{tab:regions}
\footnotesize
\setlength{\tabcolsep}{3.4pt}
\begin{tabular}{@{}llrrrrr@{}}
\toprule
Dataset & Emotion & Frontal & Central & Temporal & Parietal & Occipital\\
\midrule
\multirow{3}{*}{SEED} & Negative & \phantom{$-$}0.70 & \phantom{$-$}0.33 & $-$2.05 & \phantom{$-$}0.48 & \phantom{$-$}0.54\\
 & Neutral & \phantom{$-$}0.95 & \phantom{$-$}0.14 & $-$2.43 & \phantom{$-$}0.54 & \phantom{$-$}0.80\\
 & Positive & \phantom{$-$}0.49 & \phantom{$-$}0.14 & $-$1.30 & \phantom{$-$}0.16 & \phantom{$-$}0.51\\
\midrule
\multirow{4}{*}{SEED-IV} & Fear & \phantom{$-$}0.38 & $-$0.01 & $-$0.55 & $-$0.31 & \phantom{$-$}0.50\\
 & Happy & \phantom{$-$}0.34 & \phantom{$-$}0.16 & $-$0.24 & $-$0.52 & \phantom{$-$}0.25\\
 & Neutral & \phantom{$-$}0.58 & \phantom{$-$}0.00 & $-$0.27 & $-$0.27 & $-$0.04\\
 & Sad & \phantom{$-$}0.27 & $-$0.16 & $-$0.31 & $-$0.14 & \phantom{$-$}0.34\\
\bottomrule
\end{tabular}
\end{table}

To relate the graph to the frequency bands, we define the band-wise topology sensitivity of electrode $i$ as
\begin{equation}
I_{b,i}=\mathbb{E}_t\Big[\sum_{j=1}^{N}\big|A_{T,ij}-A^{(\setminus b)}_{T,ij}\big|\Big],
\label{eq:bandsens}
\end{equation}
where $\mathbf{A}^{(\setminus b)}_T$ is computed with band $b$ set to its calibrated mean. Fig.~\ref{fig:bands} maps $I_{b,i}$ over the scalp for the $\alpha$, $\beta$ and $\gamma$ bands. On SEED-IV, for every emotion, the mean sensitivity over the electrodes is higher in the $\beta$ and $\gamma$ bands, 0.22 to 0.23 and 0.19 to 0.23, than in the $\alpha$ band, 0.18 to 0.19, and in each of these bands the most sensitive electrode is the midline electrode CPz or FPz or the left temporal electrode T7. On SEED, the sensitivity grows from the $\alpha$ band, 0.13 to 0.14, to the $\beta$ band, 0.18 to 0.19, and the $\gamma$ band, 0.24 to 0.27, and the parietal electrode P1 is the most sensitive in the $\gamma$ band for every emotion. This emphasis is consistent with Zheng and Lu~\cite{zheng2015investigating}, who found the $\beta$ and $\gamma$ bands most discriminative.

\section{Discussion}
\label{sec:discussion}

\subsection{Eye movements as modulators of cortical connectivity}
\label{sec:disc_edges}
CTD-GNet tests the hypothesis that the emotion information in eye movements and temporal context can be expressed as a modulation of how cortical regions interact~\cite{corbetta2002control}. The results are consistent with it. The components that make the graph depend on the window and its context have the largest ablation effects, and the teacher keeps its accuracy when the eye movements only modulate the EEG graph and never enter the readout, so modulating the graph is a sufficient route for the eye information. The eye movements shift the attention of the graph from the temporal electrodes toward frontal electrodes for every emotion and toward occipital electrodes in six of seven classes, which may reflect the role of these regions in eye movements and vision. The model that keeps EEG as the substrate is the most consistent across subjects, has the highest mean accuracy, degrades gracefully without eye data and is the most accurate on film clips unseen in training on SEED. Eye movements are the stronger single modality on both datasets, yet this model gains the most over eye-only recognition: the role of a signal in a model need not follow its standalone accuracy.

\subsection{What bidirectional distillation provides}
Learning using privileged information usually passes outputs or features from teacher to student~\cite{lopezpaz2016unifying}. CTD-GNet couples the two networks in both directions, at the predictions, the node relations and the adjacency matrix, and the coupling serves two purposes. It improves the teacher: one-directional distillation leaves the teacher at the level without distillation, whereas the bidirectional form raises it by 1.2 points on SEED, and coupling the predictions alone gives most of this gain, in line with deep mutual learning~\cite{zhang2018deep}, where networks trained together improve each other. The coupling of the graphs reduces the teacher--student graph divergence 35-fold, so that the EEG-only student, which runs without an eye tracker, approximates the graph of the multimodal teacher and both networks can be read on one graph. Coupling with an EEG-only student can thus improve the multimodal model itself, not only the deployable one.

\subsection{Implications}
Emotion-aware systems for mental health, education, driving and virtual reality must work for new users and with imperfect sensors, yet the fusion baselines fall below EEG-only accuracy when the eye tracker fails. Consistency across subjects and graceful degradation, the two strengths of CTD-GNet, address exactly these conditions. The principle extends to any auxiliary signal that describes how the parts of a primary signal interact, such as facial video or peripheral physiology for EEG: Proposition~\ref{prop:cancel} holds for every attention graph with additive cues, and bidirectional topology distillation aligns the graph of the multimodal model with that of a model of the primary signal alone.

\subsection{Limitations and future work}
The evaluation covers two film-clip datasets from the same laboratory, with the officially smoothed eye-movement features and a session-wise calibration that uses the unlabeled windows of the test session, which an online system would replace by a calibration recording or running statistics. On film clips unseen in training, all models that use eye movements lose 10 to 14 points, against 1.6 to 3.2 points for EEG alone, so part of the eye-movement accuracy is specific to the clips. The margins over Bimodal-LSTM and MMIN are not significant, the student gains little accuracy from distillation, and the frontal gain in Table~\ref{tab:regions} may partly reflect electro-oculographic (EOG) activity, for which we ran no artifact-removal control. Future work will extend CTD-GNet to further datasets with more stimuli per emotion, to recordings outside the laboratory, to raw gaze features and to EOG-corrected EEG, and will give the student a longer EEG context.

\section{Conclusion}
\label{sec:conclusion}

We presented CTD-GNet, in which EEG forms the nodes of an electrode graph and eye movements and temporal context modulate its edges, nodes and readout. Coupling the multimodal teacher with a single-window EEG student in both directions improves the teacher and aligns the two graphs 35-fold, so that the EEG-only student approximates the teacher graph. Against 17 baselines on SEED and SEED-IV, CTD-GNet achieves the highest mean accuracy, 84.91\% and 76.17\%, with the smallest spread across subjects among multimodal models and the highest worst-subject accuracy, 74.02\% and 54.81\%. It is the most accurate model at every rate of missing eye data on SEED-IV, keeps 66.1\% on SEED without eye data, more than any EEG-only model, and is the most accurate on film clips unseen in training on SEED. Assigning each signal its physiological role thus yields multimodal recognizers that stay consistent across users and robust to sensor failure.

\bibliographystyle{IEEEtran}
\bibliography{cas-refs}

\end{document}